\let\OldStatex\Statex
\renewcommand{\Statex}[1][3]{%
	\setlength\@tempdima{\algorithmicindent}%
	\OldStatex\hskip\dimexpr#1\@tempdima\relax}
\newcommand{\StatexIndent}[1][3]{%
	\setlength\@tempdima{\algorithmicindent}%
	\Statex\hskip\dimexpr#1\@tempdima\relax}
\newtheorem{theorem}{Theorem}[section]
\newtheorem{lemma}{Lemma}[section]
\newtheorem{remark}{Remark}[section]
\newtheorem*{theorem*}{Theorem}
\newtheorem*{lemma*}{Lemma}
\newcommand{\p}{\partial}
\newcommand{\real}{\mathbb{R}}
\newcommand{\sym}[1]{\mathbb{S}^{#1}}
\newcommand{\postol}{{\epsilon}}
\newcommand{\maxtol}{{c}}
\newcommand{\sympos}[1]{\mathbb{S}^{#1}_\postol}
\newcommand{\schur}{\mathcal{S}}
\newcommand{\stabilizable}{\bar{\mathcal{S}}}
\newcommand{\norm}[2]{\Vert#1\Vert_{#2}}
\newcommand{\normal}[2]{\mathcal{N}\left({#1},{#2}\right)}
\newcommand{\normalsmall}[2]{\mathcal{N}({#1},{#2})}
\newcommand{\ev}{\mathbb{E}}
\newcommand{\trace}[1]{\textup{tr} \ #1}
\newcommand{\diag}{\textup{diag}}
\newcommand{\conv}{\textup{conv}}
\newcommand{\tpltz}{\textup{toeplitz}}
\newcommand{\ml}{\prec}
\newcommand{\mle}{\preceq}
\newcommand{\mg}{\succ}
\newcommand{\mge}{\succeq}
\newcommand{\st}{\textup{s.t.}}
\newcommand{\defeq}{:=}
\newcommand{\eqdef}{=:}
\newcommand{\refsec}{\S}
\newcommand{\mysec}{Section}
\newcommand{\cl}{{\textup{cl}}}
\newcommand{\disccol}{black} 
\newcommand{\jackcol}{black} 
\newcommand{\discuss}[1]{\textcolor{\disccol}{#1}} 
\newcommand{\jack}[1]{\textcolor{\jackcol}{#1}} 
\newcommand{\bw}{G}
\newcommand{\atr}{{A_\textup{tr}}}
\newcommand{\btr}{{B_\textup{tr}}}
\newcommand{\als}{A_\textup{ls}}
\newcommand{\bls}{B_\textup{ls}}
\newcommand{\distcov}{\Pi}
\newcommand{\distcovtr}{{\distcov_\textup{tr}}}
\newcommand{\param}{\theta}
\newcommand{\paramtr}{{\theta_\textup{tr}}}
\newcommand{\groupparam}[1]{\lbrace#1\rbrace}
\newcommand{\paramab}{\theta_{AB}}
\newcommand{\data}{\mathcal{D}}
\newcommand{\numrollouts}{N}
\newcommand{\roli}{r}
\newcommand{\rolis}{{\roli}}
\newcommand{\numgendata}{N}
\newcommand{\xplus}{x_{+}}
\newcommand{\xminus}{x_{-}}
\newcommand{\xu}{D_{x,u}}
\newcommand{\policy}{\phi}
\newcommand{\cost}{J} 
\newcommand{\costbound}{\hat{J}}
\newcommand{\mccostbound}{\hat{J}_\nummc^\conf}
\newcommand{\ccost}{J^c} 
\newcommand{\mccost}{J^c_\nummc} 
\newcommand{\kpsep}{|} 
\newcommand{\lyap}{X}
\newcommand{\conf}{c}
\newcommand{\confregion}{{\Theta^\conf}}
\newcommand{\invlyap}{Y}
\newcommand{\cntrllyap}{L}
\newcommand{\slacklyap}{Z}
\newcommand{\taylor}{T}
\newcommand{\nomx}{\bar{X}}
\newcommand{\nomlyap}{\bar{X}}
\newcommand{\nomk}{\bar{K}}
\newcommand{\klqr}{K_\textup{lqr}}
\newcommand{\convtol}{\epsilon}
\newcommand{\posterior}{\pi}
\newcommand{\uposterior}{\bar{\pi}}
\newcommand{\nummc}{M}
\newcommand{\sampleset}{\tilde{\Theta}^\conf_\nummc}
\newcommand{\ssc}[1]{_{#1}}
\newcommand{\wishart}[2]{\mathcal{W}^{-1}(#1,#2)}
\newcommand{\wishmat}{\Phi}
\newcommand{\wishdof}{\nu}
\newcommand{\opt}{\emph{optimal}}
\newcommand{\nom}{\emph{nominal}}
\newcommand{\wc}{\emph{worst-case}}
\newcommand{\mixed}{$H_2/H_\infty$}
\newcommand{\mixeditr}{\emph{alternate-r}}
\newcommand{\comlyap}{\emph{CL}}
\newcommand{\ours}{\emph{proposed}}
\newcommand{\mixeds}{\emph{alternate-s}}
\newcommand{\arm}{\alpha_a}
\newcommand{\pend}{\alpha_p}
\title{Learning convex bounds for linear quadratic control policy synthesis
}
\author{
  Jack Umenberger \\
  Department of Information Technology\\
  Uppsala University\\
  Sweden \\
  \texttt{jack.umenberger@it.uu.se} \\
   \And
   Thomas B. Sch\"on \\
  Department of Information Technology\\
	Uppsala University\\
	Sweden \\
\texttt{thomas.schon@it.uu.se} \\
}
\begin{document}

\maketitle

\begin{abstract}
Learning to make decisions from observed data in dynamic environments remains a problem of fundamental importance in a number of fields, from artificial intelligence and robotics, to medicine and finance.
This paper concerns the problem of learning control policies for unknown linear dynamical systems so as to maximize a quadratic reward function.
We present a method to optimize the expected value of the reward over the posterior distribution of the unknown system parameters, given data.
The algorithm involves sequential convex programing, and enjoys reliable local convergence and robust stability guarantees.
Numerical simulations and stabilization of a real-world inverted pendulum are used to demonstrate the approach, with strong performance and robustness properties observed in both.
\end{abstract}

\section{Introduction}
Decision making for dynamical systems in the presence of uncertainty is a problem of great prevalence and importance, 
as well as considerable difficulty, especially when knowledge of the dynamics is available only via limited observations of system behavior.
In machine learning, the data-driven search for a control policy to maximize the expected reward attained by a stochastic 
dynamic process is known as \emph{reinforcement learning} (RL) \cite{sutton1998reinforcement}.
Despite remarkable recent success in games \cite{mnih2015human,silver2016mastering}, a major obstacle to the deployment RL-based control on physical systems (e.g. robots and self-driving cars) is the issue of \emph{robustness}, i.e., guaranteed safe and reliable operation.
With the necessity of such guarantees widely acknowledged \cite{amodei2016concrete}, so-called `safe RL' remains an active area of research \cite{garcia2015comprehensive}.

The problem of robust automatic decision making for uncertain dynamical systems has also been the subject of intense study in the area of \emph{robust control} (RC) \cite{zhou1998essentials}.
In RC, one works with a set of plausible models and seeks a control policy that is guaranteed to stabilize all models within the set. 
In addition, there is also a performance objective to optimize, i.e. a reward to be maximized, or equivalently, a cost to be minimized. 
Such cost functions are usually defined with reference to either a nominal model \cite{doyle1994mixed,haddad1991mixed} or the worst-case model \cite{petersen2000minimax} in the set.
RC has been extremely successful in a number of engineering applications \cite{postlethwaite2007robust}; however,
as has been noted, e.g., \cite{tu2017least,ostafew2016robust}, robustness may (understandably) come at the expense of performance, particularly for worst-case design.

The problem we address in this paper lies at the intersection of reinforcement learning and robust control, and can be summarized as follows: given observations from an unknown dynamical system, we seek a policy to optimize the expected cost (as in RL), subject to certain robust stability guarantees (as in RC).
Specifically, we focus our attention on control of linear time-invariant dynamical systems, subject to Gaussian disturbances, with the goal of minimizing a quadratic function penalizing state deviations and control action.
When the system is known, this is the classical linear quadratic regulator (LQR), a.k.a. $H_2$, optimal control problem \cite{burl1998linear}. 
We are interested in the setting in which the system is unknown, and knowledge of the dynamics must be inferred from observed data.

\paragraph{Contributions and paper structure}
The principle contribution of this paper is an algorithm to optimize the expected value of the linear quadratic regulator reward/cost function, where the expectation is w.r.t. the posterior distribution of unknown system parameters, given observed data; c.f. \mysec\ \ref{sec:problem_formulation} for a detailed problem formulation.
Specifically, we construct a sequence of convex approximations (upper bounds) to the expected cost, that can be optimized via semidefinite programing \cite{vandenberghe1996semidefinite}.
The algorithm, developed in \mysec\ \ref{sec:solution}, invokes the majorize-minimization (MM) principle \cite{lange2000optimization}, and consequently enjoys reliable convergence to local optima.
An important part of our contribution lies in guarantees on the \discuss{robust stability properties of the resulting control policies}, c.f. \mysec\ \ref{sec:stability}.
We demonstrate the proposed method via two experimental case studies: i) the benchmark problem on simulated systems considered in \cite{dean2017sample,tu2017least}, and ii) stabilization of a real-world inverted pendulum.
Strong performance and robustness properties are observed in both.
\discuss{Moving forward, from a machine learning perspective this work contributes to the growing body of research concerned with ensuring robustness in RL, c.f. \mysec\ \ref{sec:literature}. 
From a control perspective, this work appropriates cost functions more commonly found in RL (namely, expected reward)  to a RC setting, with the objective of reducing conservatism of the resulting robust control policies.}

\section{Related work}\label{sec:literature}
Incorporating various notions of `robustness' into RL has long been an area of active research \cite{garcia2015comprehensive}.
In so-called `safe RL', one seeks to respect certain safety constraints during exploration and/or policy optimization, for example, avoiding undesirable regions of the state-action space \cite{geibel2005risk,abbeel2005exploration}.
A related problem is addressed in `risk-sensitive RL', in which the search for a policy takes both the expected value and variance of the reward into account \cite{mihatsch2002risk,depeweg2018decomposition}.
Recently, there has been an increased interest in notions of robustness more commonly considered in control theory, chiefly \emph{stability} \cite{ostafew2016robust,aswani2013provably}.
Of particular relevance is the work of \cite{berkenkamp2017}, which employs Lyapunov theory \cite{khalil1996noninear} to verify stability of learned policies.
Like the present paper, \cite{berkenkamp2017} adopts a Bayesian framework; however, \cite{berkenkamp2017} makes use of Gaussian processes \cite{rasmussen2004gaussian} to model the uncertain nonlinear dynamics, which are assumed to be deterministic.
A major difference between \cite{berkenkamp2017} and our work is the cost function; in the former the policy is selected by optimizing for worst-case performance, whereas we optimize the expected cost.
Robustness of data-driven control has also been the focus of a recently developed family of methods referred to as `coarse-ID control', c.f.,\cite{tu2017coarse,dean2017sample,boczar2018finite,simchowitz2018learning}, in which finite-data bounds on the accuracy of the least squares estimator are combined with modern robust control tools, such as \emph{system level synthesis} \cite{wang2016system}.
Coarse-ID builds upon so-called `$H_\infty$ identification' methods for learning models of dynamical systems, along with error bounds that are compatible with robust synthesis methods \cite{helmicki1991control,chen1993caratheodory,chen2000control}. 
$H_\infty$ identification assumes an adversarial (i.e. worst-case) disturbance model, whereas Coarse-ID is applicable to probabilistic models, such as those considered in the present paper.
Of particular relevance to the present paper is \cite{dean2017sample}, which provides sample complexity bounds on the performance of robust control synthesis for the infinite horizon LQR problem, when the true system is not known.
Such bounds necessarily consider the worst-case model, given the observed data, where as we are concerned with expected cost over the posterior distribution of models.
In closing, we briefly mention the so-called `Riemann-Stieltjes' class of optimal control problems, for uncertain continuous-time dynamical systems, c.f., e.g., \cite{ross2015riemann,ross2014unscented}. 
Such problems often arise in aerospace applications (e.g. satellite control) 
where the objective is to design an open-loop control signal (e.g. for an orbital maneuver) rather than a feedback policy.

\section{Problem formulation}\label{sec:problem_formulation}

In this section we describe in detail the specific problem that we address in this paper.
The following notation is used:
$\sym{n}$ denotes the set of $n\times n$ symmetric matrices;
$\sym{n}_{+}$ ($\sym{n}_{++}$) denotes the cone of positive semdefinite (positive definite) matrices.
$A\mge B$ denotes $A-B\in\sym{n}_{+}$, similarly for $\mg$ and $\sym{n}_{++}$.
The trace of $A$ is denoted $\trace{A}$. 
The transpose of $A$ is denoted $A'$.
$|a|_Q^2$ is shorthand for $a'Qa$.
The convex hull of set $\Theta$ is denoted $\conv\Theta$.
The set of Schur stable matrices is denoted $\schur$.

\paragraph{Dynamics, reward function and policies}
We are concerned with control of discrete linear time-invariant dynamical systems of the form
\begin{equation}\label{eq:trueSystem}
x_{t+1}=Ax_t + Bu_t + w_t, \qquad w_t\sim\normalsmall{0}{\distcov},
\end{equation}
where $x_t\in\real^{n_x}$, $u_t\in\real^{n_u}$, and $w_t\in\real^{n_w}$ denote the state, input, and unobserved exogenous disturbance at time $t$, respectively.
Let $\param\defeq\groupparam{A,B,\distcov}$.
Our objective is to design a feedback control policy $u_t=\policy(x_t)$ that minimizes the cost function
$\lim_{T\rightarrow\infty}\frac{1}{T}\sum_{t=0}^{T}\ev\left[x_t'Qx_t+u_t'Ru_t\right]$,
where $x_t$ evolves according to \eqref{eq:trueSystem}, and $Q\mge0$ and $R\mg0$ are user defined weight matrices.
A number of different parametrizations of the policy $\policy$ have been considered in the literature, from neural networks (popular in RL, e.g., \cite{berkenkamp2017}) to causal (typically linear) dynamical systems (common in RC, e.g., \cite{petersen2000minimax}).
In this paper, we will restrict our attention to static-gain policies of the form $u_t=Kx_t$, where $K\in\real^{n_u\times n_x}$ is constant.
As noted in \cite{dean2017sample}, controller synthesis and implementation, is simpler (and more computationally efficient) for such policies.
When the parameters of the true system, denoted $\paramtr\defeq\groupparam{\atr,\btr,\distcovtr}$, are known this is the infinite horizon LQR problem, the optimal solution of which is well-known \cite{bertsekas1995dp}.
We assume that $\paramtr$ is unknown; rather, our knowledge of the dynamics must be inferred from observed sequences of inputs and states.

\paragraph{Observed data} 
We adopt the 
\discuss{data-driven setup} 
used in \cite{dean2017sample}, and assume that $\data\defeq\lbrace x_{0:T}^\rolis, u_{0:T}^\rolis\rbrace_{\roli=1}^\numrollouts$ where  
$x_{0:T}^\rolis=\lbrace x_{t}^\rolis\rbrace_{t=0}^T$
is the observed state sequence attained by evolving the true system for $T$ time steps, starting from an arbitrary $x_0^\rolis$ and driven by arbitrary input
$u_{0:T}^\rolis=\lbrace u_{t}^\rolis\rbrace_{t=0}^T$.
Each of these $\numrollouts$ independent experiments is referred to as a \emph{rollout}.
We perform parameter inference in the offline/batch setting; i.e., all data $\data$ is assumed to be available at the time of controller synthesis.

\paragraph{Optimization objective}
Given observed data and, possibly, prior knowledge of the system, we then have the posterior distribution over the model parameters denoted $\posterior(\param)\defeq p(A,B,\distcov|\data)$, in place of the true parameters $\paramtr$.
The function that we seek to minimize is the expected cost w.r.t. the posterior distribution, i.e.,
\begin{equation}\label{eq:expected_cost}
\lim_{T\rightarrow\infty} \frac{1}{T} \sum_{t=0}^{T} \ev \left[x_t'Qx_t+u_t'Ru_t \ | \ x_{t+1}=Ax_t + Bu_t + w_t, \ w_t\sim\normal{0}{\distcov}, \ \lbrace A,B,\distcov\rbrace\sim \posterior(\param)\right].
\end{equation}
In practice, the support of $\posterior$ almost surely contains $\groupparam{A,B}$ that are unstabilizable, which implies that \eqref{eq:expected_cost} is infinite. 
Consequently, we shall consider averages over confidence regions w.r.t. $\posterior$.
For convenience, let us denote the infinite horizon LQR cost, for given system parameters $\param$, by
\begin{subequations}
\begin{align}
\cost(K\kpsep\param) \defeq& \lim_{t\rightarrow\infty} \ev\left[x_t'(Q+K'RK)x_t \ | \ x_{t+1}=(A+BK)x_t+ w_t,  \ w\sim\normal{0}{\distcov}\right] \\
=& \begin{cases} 
\trace{\lyap\distcov} \textup{ with } \lyap = (A+BK)'\lyap(A+BK)+Q+K'RK, & A+BK \in\schur \\
\quad\infty, & \textup{otherwise},
\end{cases}\label{eq:cost_lyap}
\end{align}
\end{subequations}
where the second equality follows from standard Gramian calculations, and $\schur$ denotes the set of Schur stable matrices.
As an alternative to \eqref{eq:expected_cost} we consider the cost function
$\ccost(K)\defeq\int_\confregion \cost(K\kpsep\param)\posterior(\param)d\param$,
where $\confregion$ denotes the $\conf\ \%$ confidence region of the parameter space w.r.t. the posterior $\posterior$.
\discuss{Though better suited to optimization than \eqref{eq:expected_cost}, which is almost surely infinite, this integral cannot be evaluated in closed form, due to the complexity of $\cost(\cdot\kpsep\param)$ w.r.t. $\param$.}
Furthermore, there is still no guarantee that $\confregion$ contain \emph{only} stabilizable models.
To circumvent both of these \jack{issues}, we \jack{propose} the following Monte Carlo approximation of $\ccost(K)$,
\begin{equation}\label{eq:mc_cost}
\mccost(K)\defeq\frac{1}{\nummc}{\sum}_{i=1}^\nummc \cost(K\kpsep\param\ssc{i}), \qquad \param\ssc{i}\sim\confregion\cap\stabilizable, \qquad i=1,\dots,\nummc,
\end{equation}
where $\stabilizable$ denotes the set of stabilizable $\groupparam{A,B}$.

\paragraph{Posterior distribution} Given data $\data$, the parameter posterior distribution is given by Bayes' rule:
\begin{equation}
\posterior(\param) \defeq p(\param|\data) 
=  \frac{1}{p(\data)}p(\data|\param)p(\param) 
\propto p(\param){\prod}_{\roli=1}^\numrollouts{\prod}_{t=1}^Tp(x_t^\rolis|x_{t-1}^\rolis,u_{t-1}^\rolis,\param)
\eqdef \uposterior(\param), 
\end{equation}
where $p(\param)$ denotes our prior belief on $\theta$, $p(x_t^\rolis|x_{t-1}^\rolis,u_{t-1}^\rolis,\param)=\normal{Ax_{t-1}^\rolis+Bu_{t-1}^\rolis}{\distcov}$, and $\uposterior=p(\data)\posterior$ denotes the unnormalized posterior.
\discuss{To sample from $\posterior$, we can distinguish between two different cases.
First, consider the case when $\distcovtr$ is known or can be reliably estimated independently of $\groupparam{A,B}$. 
This is the setting in, e.g., \cite{dean2017sample}. 
In this case, the likelihood can be equivalently expressed as a Gaussian distribution over $\groupparam{A,B}$.
Then, when the prior $p(A,B)$ is uniform (i.e. non-informative) or Gaussian (self-conjugate), the posterior $p(A,B|\distcovtr,\data)$ is also Gaussian, c.f. Appendix~\ref{sec:inference_known}.
Second, consider the general case in which $\distcovtr$, along with $\groupparam{A,B}$, is unknown.
In this setting, one can select from a number of methods adapted for Bayesian inference in dynamical systems, such as Metropolis-Hastings \cite{ninness2010bayesian}, Hamiltonian Monte Carlo \cite{cheung2009bayesian}, and Gibbs sampling \cite{ching2005bayesian,wills2012estimation}.
When one places a non-informative prior on $\distcov$ (e.g., $p(\distcov)\propto\det(\distcov)^{-\frac{n_x+1}{2}}$), each iteration of a Gibbs sampler targeting $\posterior$ requires sampling from either a Gaussian or an inverse Wishart distribution, for which reliable numerical methods exist; c.f., Appendix~\ref{sec:inference_unknown}.}
In both of these cases we can sample from $\posterior$ and evaluate $\uposterior$ point-wise.
To draw $\param\ssc{i}\sim\confregion\cap\stabilizable$, as in \eqref{eq:mc_cost}, we can first draw a large number of samples from $\posterior$, discard the (100$-\conf$)\% of samples with the lowest unnormalized posterior values, and then further discard any samples that happen to be unstabilizable.
For convenience, we define 
$\sampleset\defeq\lbrace \lbrace\param\ssc{i}\rbrace_{i=1}^\nummc \ : \ \param\ssc{i}\sim\confregion\cap\stabilizable, \ i=1,\dots,\nummc\rbrace$, which should be interpreted as a set of $\nummc$ realizations of this procedure for sampling $\param\ssc{i}\sim\confregion\cap\stabilizable$.

\paragraph{Summary} We seek the solution of the optimization problem $\min_K \ \mccost(K)$ for $K\in\real^{n_u\times n_x}$.

\section{Solution via semidefinite programing}\label{sec:solution}
In this section we present the principle contribution of this paper: a method for solving $\min_K \ \mccost(K)$ via convex (semidefinite) programing (SDP).
It is convenient to consider an equivalent representation 
\begin{subequations}\label{eq:mc_problem_alt}
\begin{align}
\min_{K, \ \lbrace\lyap\ssc{i}\rbrace_{i=1}^\nummc\in\sym{n_x}_{++}} \ & \frac{1}{\nummc}{\sum}_{i=1}^{\nummc} \trace{\lyap\ssc{i}\distcov\ssc{i}}, \label{eq:mc_problem_cost} \\
\st \quad\quad \ \ &  \lyap\ssc{i} \mge (A\ssc{i}+B\ssc{i}K)'\lyap\ssc{i}(A\ssc{i}+B\ssc{i}K)+Q+K'RK, \ \param\ssc{i}\in\sampleset, \label{eq:mc_problem_constraint}
\end{align}
\end{subequations}
where the Comparison Lemma \cite[Lecture 2]{oliveira2010course} has been used to replace the equality in \eqref{eq:cost_lyap} with the inequality in \eqref{eq:mc_problem_constraint}.
\discuss{We introduce the notation $\sympos{n}\defeq\lbrace S\in\sym{n}: S\mge\postol I, S\mle \maxtol I\rbrace$, where $\postol$ and $\maxtol$ are arbitrarily small and large positive constants, respectively.
$\sympos{n}$ serves as a compact approximation of $\sym{n}_{++}$, suitable for use with SDP solvers, i.e., $S\in\sympos{n}\implies S\in\sym{n}_{++}$.}

\subsection{Common Lyapunov relaxation}\label{sec:common_lyap}
The principle challenge in solving \eqref{eq:mc_problem_alt} is that the constraint \eqref{eq:mc_problem_constraint} is not jointly convex in $K$ and $\lyap^i$.
The usual approach to circumventing this nonconvexity is to first apply the Schur complement to  \eqref{eq:mc_problem_constraint}, and then conjugate by the matrix $\diag(\lyap\ssc{i}^{-1},I,I,I)$, which leads to the equivalent constraint
\begin{equation}\label{eq:schur_conj_constraint}
\left[\begin{array}{cccc}
\lyap\ssc{i}^{-1} 		& \lyap\ssc{i}^{-1}  (A\ssc{i}+B\ssc{i}K)'    & \lyap\ssc{i}^{-1}  Q^{1/2} & \lyap\ssc{i}^{-1}  K' \\
(A\ssc{i}+B\ssc{i}K)\lyap\ssc{i}^{-1} 	  & \lyap\ssc{i}^{-1}  & 0 \\
Q^{1/2}\lyap\ssc{i}^{-1}         &  0 & I & 0 \\
K\lyap\ssc{i}^{-1}  & 0 & 0 & R^{-1}  
\end{array}\right]\mge 0.
\end{equation}
With the change of variables $\invlyap\ssc{i}=\lyap\ssc{i}^{-1}$ and $\cntrllyap\ssc{i}=K\lyap\ssc{i}^{-1}$, \eqref{eq:schur_conj_constraint} becomes an LMI, in $\invlyap\ssc{i}$ and $\cntrllyap\ssc{i}$.
This approach is effective when $\nummc=1$ (i.e. we have a single nominal system, as in standard LQR).
However, when $\nummc>1$ we cannot introduce a new $\invlyap\ssc{i}$ for each $\lyap\ssc{i}^{-1}$, as we lose uniqueness of the controller $K$ in $\cntrllyap\ssc{i}=K\lyap\ssc{i}^{-1}$, i.e., in general $\cntrllyap\ssc{i}\invlyap\ssc{i}^{-1}\neq\cntrllyap\ssc{j}\invlyap\ssc{j}^{-1}$ for $i\neq j$.
One strategy (\jack{prevalent in robust control}, 
e.g., \cite[\refsec C]{dean2017sample}) 
is to employ a `common Lyapunov function', i.e., $\invlyap=\lyap\ssc{i}^{-1}$ for all $i=1,\dots,\nummc$.
This gives the following convex relaxation (\jack{upper bound}) of problem \eqref{eq:mc_problem_alt},
\begin{subequations}\label{eq:common_lyap}
\begin{align}
&\min_{K, \ \invlyap\in\sympos{n_x}, \ \lbrace\slacklyap\ssc{i}\rbrace_{i=1}^\nummc\in\sym{n_x}} \ \trace{\slacklyap\ssc{i}}, \\
&\st \quad \left[\begin{array}{cc}
\slacklyap^i & \bw\ssc{i} \\ \bw\ssc{i}' & \invlyap
\end{array}\right]\mge 0, \ 
\left[\begin{array}{cccc}
\invlyap	& \invlyap A\ssc{i}' + \cntrllyap'B\ssc{i}'     & \invlyap Q^{1/2} & \cntrllyap'  \\
A\ssc{i}\invlyap + B\ssc{i}\cntrllyap 	  & \invlyap & 0 \\
Q^{1/2}\invlyap       &  0 & I & 0 \\
\cntrllyap  & 0 & 0 & R^{-1}  
\end{array}\right]\mge 0, \ \param\ssc{i}\in\sampleset,
\end{align}
\end{subequations}
where $\bw\ssc{i}$ denotes the Cholesky factorization of $\distcov\ssc{i}$, i.e., $\distcov\ssc{i}=\bw\ssc{i}\bw\ssc{i}'$, and $\lbrace\slacklyap^i\rbrace_{i=1}^\nummc$ are slack variables used to encode the cost \eqref{eq:mc_problem_cost} with the change of variables, i.e.,
\begin{equation*}
\min_\invlyap \trace{\invlyap^{-1}\distcov\ssc{i}} \leq \big\lbrace \min_{\invlyap,\slacklyap\ssc{i}}\trace{\slacklyap\ssc{i}} \ \ \st \ \ \slacklyap\ssc{i} \mge \bw\ssc{i}'\invlyap^{-1}\bw\ssc{i} \big\rbrace\iff
\min_{\invlyap,\slacklyap\ssc{i}} \trace{\slacklyap\ssc{i}} \ \st \
\left[\begin{array}{cc}
\slacklyap\ssc{i} & \bw\ssc{i} \\ \bw\ssc{i}' & \invlyap
\end{array}\right]\mge 0.
\end{equation*}
The approximation in \eqref{eq:common_lyap} is highly conservative, which motivates the iterative local optimization method presented in \mysec\ \ref{sec:convex_bound}.
Nevertheless, \eqref{eq:common_lyap} provides a principled way (i.e., a one-shot convex program) to initialize the iterative search method derived in \mysec~\ref{sec:convex_bound}.

\subsection{Iterative improvement by sequential semidefinite programing}\label{sec:convex_bound}

To develop this iterative search method first consider an equivalent representation of $\cost(K\kpsep\param\ssc{i})$, 
\begin{subequations}\label{eq:localCostAlt}
	\begin{align}
	\cost(K\kpsep\param\ssc{i}) = &\min_{\lyap\ssc{i}\in\sympos{n_x}} \trace \ \lyap\ssc{i}\distcov\ssc{i}\\
	& \quad \st \ \left[\begin{array}{ccc}
	\lyap\ssc{i}- Q & (A\ssc{i}+B\ssc{i}K)' & K' \\
	A\ssc{i}+B\ssc{i}K & \lyap\ssc{i}^{-1} & 0 \\
	K & 0 & R^{-1}
	\end{array}\right]\mge 0, \label{eq:localCost_nlmi}
	\end{align}
\end{subequations}
This representation highlights the nonconvexity of $\cost(K\kpsep\param\ssc{i})$ due to the $\lyap\ssc{i}^{-1}$ term, which was addressed (in the usual way) by a change of variables in \mysec~\ref{sec:common_lyap}.
In this section, we will instead replace $\lyap\ssc{i}^{-1}$ with a linear approximation and prove that this leads to a tight convex upper bound.
Given $S\in\sym{n}_{++}$, let $\taylor(S,S_0)$ denote the first order (i.e. linear) Taylor series approximation of $S^{-1}$ about some nominal $S_0\in\sym{n}_{++}$, i.e.,
$\taylor(S,S_0) \defeq S_0^{-1} + \left.\frac{\p S^{-1}}{\p S}\right|_{S=S_0}  \left(S-S_0\right) = S_0^{-1} - S_0^{-1}\left(S-S_0\right)S_0^{-1}$. 
We now define the function
\begin{subequations}\label{eq:localBound}
	\begin{align}
	\costbound(K,\nomk\kpsep\param\ssc{i}) \defeq &\min_{\lyap\ssc{i}\in\sympos{n_x}} \trace{\lyap\ssc{i}\distcov\ssc{i}} \\
	& \quad \st \ \left[\begin{array}{ccc}
	\lyap\ssc{i}- Q & (A\ssc{i}+B\ssc{i}K)' & K' \\
	A\ssc{i}+B\ssc{i}K & \taylor(\lyap\ssc{i},\nomlyap\ssc{i}) & 0 \\
	K & 0 & R^{-1}
	\end{array}\right]\mge 0,\label{eq:bound_lmi}
	\end{align}
\end{subequations}
\discuss{where $\nomlyap\ssc{i}$ is any $\lyap\ssc{i}\in\sympos{n_x}$ that achieves the minimum in \eqref{eq:localCostAlt}, with $K=\nomk$ for some nominal $\nomk$, i.e., $\cost(\nomk\kpsep\param\ssc{i})=\trace{\nomlyap\ssc{i}\distcov\ssc{i}}$.  }
Analogously to \eqref{eq:mc_cost}, we define 
\begin{equation}\label{eq:bound}
\mccostbound(K,\nomk)\defeq \frac{1}{\nummc} {\sum}_{\param\ssc{i}\in\sampleset} \costbound(K,\nomk\kpsep\param\ssc{i}).
\end{equation}
We now show that $\mccostbound(K,\nomk)$ is a convex upper bound on $\mccost(K)$, which is tight at $K=\nomk$.
The proof is given in \ref{sec:bound_proof} and makes use of the following technical lemma (c.f. \ref{sec:taylor_proof} for proof),
\begin{lemma}\label{lem:taylor}
	$\taylor(S,S_0) \mle S^{-1}$ for all $S,S_0\in\sym{n}_{++}$, where $\taylor(S,S_0)$ denotes the first-order Taylor series expansion of $S^{-1}$ about $S_0$ .
\end{lemma}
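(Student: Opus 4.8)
The statement to prove is that $\taylor(S, S_0) \mle S^{-1}$ for all $S, S_0 \in \sym{n}_{++}$, where $\taylor(S, S_0) = S_0^{-1} - S_0^{-1}(S - S_0)S_0^{-1}$. The plan is to exploit the fact that the matrix inverse is an operator-convex function on the positive definite cone, so that its first-order Taylor expansion—being the supporting hyperplane (tangent plane) at $S_0$—lies below the graph. First I would rewrite the claimed inequality $S^{-1} - \taylor(S, S_0) \mge 0$ and substitute the explicit Taylor formula, reducing the goal to showing
\begin{equation*}
S^{-1} - S_0^{-1} + S_0^{-1}(S - S_0)S_0^{-1} \mge 0.
\end{equation*}

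The cleanest route is a direct algebraic identity rather than invoking operator convexity as a black box. I would verify that the left-hand side factors as a congruence of a positive semidefinite matrix. Specifically, the plan is to show
\begin{equation*}
S^{-1} - S_0^{-1} + S_0^{-1}(S - S_0)S_0^{-1} = (S^{-1} - S_0^{-1})S(S^{-1} - S_0^{-1}),
\end{equation*}
which can be checked by expanding the right-hand side: $(S^{-1} - S_0^{-1})S(S^{-1} - S_0^{-1}) = S^{-1} - S^{-1}SS_0^{-1} - S_0^{-1}SS^{-1} + S_0^{-1}SS_0^{-1}$, and simplifying the middle terms using $S^{-1}S = S S^{-1} = I$ gives exactly $S^{-1} - S_0^{-1} - S_0^{-1} + S_0^{-1}SS_0^{-1}$, which matches the target after rewriting $S_0^{-1}(S-S_0)S_0^{-1} = S_0^{-1}SS_0^{-1} - S_0^{-1}$. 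Since $S \mg 0$ and $S^{-1} - S_0^{-1}$ is symmetric, the right-hand side has the form $M' S M$ with $M = M'= S^{-1} - S_0^{-1}$, hence is positive semidefinite.

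The main obstacle I anticipate is purely bookkeeping: ensuring the symmetry of the expressions is handled correctly (since $S$, $S_0$ and their inverses are all symmetric, all cross terms transpose cleanly) and that the algebraic identity is expanded without sign errors. There is no deep analytic difficulty—the inequality is an instance of the tangent plane lying below a convex function, and the congruence identity $A'SA \mge 0$ for $S \mg 0$ makes the positive semidefiniteness transparent. I would present the single factorization identity as the core of the proof, verify it by direct expansion, and conclude immediately. An alternative, slightly more abstract argument would diagonalize in the basis that simultaneously simplifies $S_0^{-1/2} S S_0^{-1/2}$, reducing to the scalar inequality $1/\lambda \ge 1 - (\lambda - 1) = 2 - \lambda$ for $\lambda > 0$ (equivalently $(\lambda-1)^2 \ge 0$); but the direct congruence identity is shorter and avoids introducing a change of basis.
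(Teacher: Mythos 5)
Your proof is correct, but it takes a genuinely different route from the paper's. The paper factors $D = S - S_0 = L'L$ (a Cholesky factorization) and pushes $D$ through the Woodbury inversion identity: from $(I + LS_0^{-1}L')^{-1} \mle I$ it deduces, by congruence with $LS_0^{-1}$, that $(S_0 + L'L)^{-1} \mge S_0^{-1} - S_0^{-1}DS_0^{-1} = \taylor(S,S_0)$. You instead verify the single congruence identity $S^{-1} - \taylor(S,S_0) = (S^{-1}-S_0^{-1})\,S\,(S^{-1}-S_0^{-1})$, whose right-hand side has the form $M'SM$ with $M = S^{-1}-S_0^{-1}$ symmetric and $S \mg 0$, hence is positive semidefinite; your expansion checks out. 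Notably, your route is not just shorter but strictly more general: a real factorization $D = L'L$ exists only when $S - S_0 \mge 0$, so the paper's proof as written silently covers only the case $S \mge S_0$, whereas your identity (and your alternative reduction, after conjugating by $S_0^{-1/2}$, to the scalar inequality $1/\lambda \ge 2 - \lambda$, i.e.\ $(\lambda-1)^2 \ge 0$) holds for arbitrary $S, S_0 \in \sym{n}_{++}$, which is exactly what the lemma claims. Both arguments express the same underlying fact---the tangent to the operator-convex map $S \mapsto S^{-1}$ underestimates it---but yours additionally makes the equality case transparent: the gap $(S^{-1}-S_0^{-1})S(S^{-1}-S_0^{-1})$ vanishes precisely when $S = S_0$, which is the tightness property exploited in Theorem \ref{thm:bound}.
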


\begin{theorem}\label{thm:bound}
	Let $\mccostbound(K,\nomk)$ be defined as in \eqref{eq:bound}, \discuss{with $\nomk$ such that $\mccost(\nomk)$ is finite.} 
	Then $\mccostbound(K,\nomk)$ is a convex upper bound on $\mccost(K)$, i.e., $\mccostbound(K,\nomk)\geq\mccost(K) \ \forall K$.
	Furthermore, the bound is `tight' at $\nomk$, i.e., $\mccostbound(\nomk,\nomk)=\mccost(\nomk)$.
\end{theorem}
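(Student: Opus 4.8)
The plan is to reduce everything to per-sample statements and then exploit two properties of the Taylor surrogate $\taylor(\lyap\ssc{i},\nomlyap\ssc{i})$: that it is affine in $\lyap\ssc{i}$, and that (by Lemma~\ref{lem:taylor}) it underestimates $\lyap\ssc{i}^{-1}$ with equality at the expansion point. Since both $\mccost$ and $\mccostbound$ are averages over $\param\ssc{i}\in\sampleset$ with equal positive weights $1/\nummc$, it suffices to establish convexity, the upper bound, and tightness for each summand $\costbound(K,\nomk\kpsep\param\ssc{i})$ against $\cost(K\kpsep\param\ssc{i})$.

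For convexity I would first observe that, for fixed $\nomk$ (hence fixed $\nomlyap\ssc{i}$), every entry of the constraint matrix in \eqref{eq:bound_lmi} is affine in the pair $(K,\lyap\ssc{i})$: the blocks $A\ssc{i}+B\ssc{i}K$ and $K$ are linear in $K$, the block $\lyap\ssc{i}-Q$ is linear in $\lyap\ssc{i}$, and crucially $\taylor(\lyap\ssc{i},\nomlyap\ssc{i})=\nomlyap\ssc{i}^{-1}-\nomlyap\ssc{i}^{-1}(\lyap\ssc{i}-\nomlyap\ssc{i})\nomlyap\ssc{i}^{-1}$ is affine in $\lyap\ssc{i}$. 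Thus \eqref{eq:bound_lmi} is a genuine LMI in $(K,\lyap\ssc{i})$ and defines a convex set; together with the convex box constraint $\lyap\ssc{i}\in\sympos{n_x}$ and the linear objective $\trace{\lyap\ssc{i}\distcov\ssc{i}}$, the program defining $\costbound$ is jointly convex in $(K,\lyap\ssc{i})$. Partial minimization over $\lyap\ssc{i}$ of a jointly convex function yields a convex function of $K$, so each summand---and hence $\mccostbound(\cdot,\nomk)$---is convex.

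For the upper bound I would argue by feasible-set containment, which handles the finite and infinite cases uniformly. Suppose $\lyap\ssc{i}$ is feasible for the $\costbound$ program at some $K$. By Lemma~\ref{lem:taylor}, $\taylor(\lyap\ssc{i},\nomlyap\ssc{i})\mle\lyap\ssc{i}^{-1}$, so replacing the $(2,2)$ block $\taylor(\lyap\ssc{i},\nomlyap\ssc{i})$ with $\lyap\ssc{i}^{-1}$ adds a positive semidefinite matrix to the constraint matrix. A sum of positive semidefinite matrices is positive semidefinite, so the same $\lyap\ssc{i}$ satisfies the constraint \eqref{eq:localCost_nlmi} defining $\cost(K\kpsep\param\ssc{i})$. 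Hence the $\costbound$ feasible set is contained in the $\cost$ feasible set, and since both minimize the identical objective $\trace{\lyap\ssc{i}\distcov\ssc{i}}$, minimizing over the smaller set can only increase the optimum: $\costbound(K,\nomk\kpsep\param\ssc{i})\geq\cost(K\kpsep\param\ssc{i})$ for every $K$, with the convention that an empty feasible set gives value $+\infty$. Averaging over $\sampleset$ gives $\mccostbound(K,\nomk)\geq\mccost(K)$.

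Tightness follows from exactness of the first-order Taylor expansion at its expansion point. By assumption $\mccost(\nomk)$ is finite, so each $\cost(\nomk\kpsep\param\ssc{i})$ is finite and attained at $\nomlyap\ssc{i}\in\sympos{n_x}$. Evaluating the $\costbound$ constraint at $K=\nomk$ and $\lyap\ssc{i}=\nomlyap\ssc{i}$, the $(2,2)$ block becomes $\taylor(\nomlyap\ssc{i},\nomlyap\ssc{i})=\nomlyap\ssc{i}^{-1}$, so the constraint matrix coincides with the feasible one defining $\cost(\nomk\kpsep\param\ssc{i})$; thus $\nomlyap\ssc{i}$ is feasible for $\costbound(\nomk,\nomk\kpsep\param\ssc{i})$ with objective $\trace{\nomlyap\ssc{i}\distcov\ssc{i}}=\cost(\nomk\kpsep\param\ssc{i})$, giving $\costbound(\nomk,\nomk\kpsep\param\ssc{i})\leq\cost(\nomk\kpsep\param\ssc{i})$. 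Combined with the upper bound this is an equality, and averaging yields $\mccostbound(\nomk,\nomk)=\mccost(\nomk)$. The main obstacle is the upper-bound step: one must correctly invoke Lemma~\ref{lem:taylor} and, more delicately, translate the block ordering $\taylor(\lyap\ssc{i},\nomlyap\ssc{i})\mle\lyap\ssc{i}^{-1}$ into an inclusion of feasible sets via monotonicity of the LMI under a positive semidefinite perturbation of a single diagonal block---the dual role of $\taylor$ (affine, for convexity, yet a tight lower bound on the inverse, for the bound) is what makes the construction work.
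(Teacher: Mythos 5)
Your proof is correct and follows the paper's argument essentially step for step: the same per-sample reduction, the same use of Lemma~\ref{lem:taylor} to obtain feasible-set containment in \eqref{eq:bound_lmi} versus \eqref{eq:localCost_nlmi} (hence the upper bound, with the empty-set/$+\infty$ convention handling the unstable case uniformly), and the same tightness argument via $\taylor(\nomlyap\ssc{i},\nomlyap\ssc{i})=\nomlyap\ssc{i}^{-1}$, which makes $\nomlyap\ssc{i}$ feasible at $K=\nomk$ with objective value $\cost(\nomk\kpsep\param\ssc{i})$. The one place you diverge is the convexity step, where your argument---the constraint is a genuine LMI in $(K,\lyap\ssc{i})$ because $\taylor(\cdot,\nomlyap\ssc{i})$ is affine, and partial minimization of a jointly convex program over $\lyap\ssc{i}$ preserves convexity in $K$---is cleaner and more standard than the paper's somewhat opaque remark about suprema of convex functions, and reaches the same conclusion.
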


\paragraph{Iterative algorithm}
To improve upon the common Lyapunov solution given by \eqref{eq:common_lyap}, we can solve a sequence of convex optimization problems: $K^{(k+1)}=\arg\min_K \mccostbound(K,K^{(k)})$, c.f. Algorithm~\ref{alg:sequential_sdp} for details.
This procedure of optimizing tight surrogate functions in lieu of the actual objective function is an example of the `majorize-minimization (MM) principle', a.k.a. optimization transfer \cite{lange2000optimization}.
MM algorithms enjoy good numerical robustness, and (with the exception of some pathological cases) reliable convergence to local minima \cite{vaida2005parameter}.
Indeed, it is readily verified that $\mccost(K^{(k)})=\mccostbound(K^{(k)},K^{(k)})\geq\mccostbound(K^{(k+1)},K^{(k)})\geq \mccost(K^{(k+1)})$, where equality follows from tightness of the bound, and the second inequality is due to the fact that $\mccostbound(K,K^{(k)})$ is an upper bound.
This implies that $\lbrace \mccost(K^{(k)})\rbrace_{k=1}^\infty$ is a converging sequence.

\begin{algorithm}
	\caption{Optimization of $\mccost(K)$ via semidefinite programing}\label{alg:sequential_sdp}
	\begin{algorithmic}[1]
		\State \textbf{Input:} observed data $\data$, confidence $\conf$, LQR cost matrices $Q$ and $R$, number of particles in Monte Carlo approximation $\nummc$, convergence tolerance $\convtol$.
		\State Generate $\nummc$ samples from $\confregion\cap\stabilizable$, i.e., $\sampleset$, using the appropriate Bayesian inference method from \mysec~\ref{sec:problem_formulation}.
		\State Solve \eqref{eq:common_lyap}. Let $K_\cl$ denote the optimal solution of \eqref{eq:common_lyap}. Set $K^{(0)}\leftarrow \infty$, $K^{(1)}\leftarrow K_\cl$ and $k\leftarrow1$.
		\While{$|\mccost(K^{(k)}-\mccost(K^{(k-1)})|>\convtol$} \label{alg:convergence}
		\State Solve $K^*=\arg\min_K \ \mccostbound(K,K^{(k)})$. Set $K^{(k+1)}\leftarrow K^*$ and $k\leftarrow k+1$. 
		\EndWhile 
		\State\Return $K^{(k)}$ as the control policy.
	\end{algorithmic}
\end{algorithm}

\begin{remark}\label{rem:iterative}
	\textcolor{\disccol}{This sequential SDP approach can be applied in other robust control settings, e.g., mixed $H_2/H_\infty$ \cite{doyle1994mixed}, to improve on the common Lyapunov solution, c.f., \mysec \ref{sec:simulations} for an illustration.}
\end{remark}

\subsection{Robustness}\label{sec:stability}
\discuss{
Hitherto, we have considered the performance component of the robust control problem, namely minimization of the expected cost; we now address the robust stability requirement.
It is desirable for the learned policy to stabilize every model in the confidence region $\confregion$; in fact, this is necessary for the cost $\ccost(K)$ to be finite.
Algorithm \ref{alg:sequential_sdp} ensures stability of each of the $\nummc$ sampled systems from $\sampleset$, which implies that $\policy$ stabilizes the entire region as $\nummc\rightarrow\infty$.
However, we would like to be able to say something about robustness for finite $\nummc$.
To this end, we make two remarks.
}
First, if closed-loop stability of each sampled model is verified with a common Lyapunov function, then the policy stabilizes the convex hull of the sampled systems: 
\begin{theorem}\label{thm:convex_hull}
	Suppose there exists $K\in\real^{n_x\times n_u}$ such that $(A\ssc{i}+B\ssc{i}K)'X(A\ssc{i}+B\ssc{i}K)-X\ml0$ for $X\mg0$ and all $\Theta=\lbrace A\ssc{i},B\ssc{i}\rbrace_{i=1}^N$.
	Then $(A+BK)'X(A+BK)-X\ml0$ for all $\lbrace A,B\rbrace\in\conv\Theta$, where $\conv\Theta$ denotes the convex hull of $\Theta$.
\end{theorem}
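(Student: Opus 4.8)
The plan is to exploit the fact that, for a fixed gain $K$, the closed-loop matrix $A+BK$ depends \emph{affinely} on the pair $\{A,B\}$, so that the claim reduces to a statement about convex combinations of matrices, after which (matrix) convexity of the quadratic form $M\mapsto M'XM$ finishes the argument.

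First I would fix the common certificate $X\mg0$ and abbreviate $M\ssc{i}\defeq A\ssc{i}+B\ssc{i}K$, so that the hypothesis reads $M\ssc{i}'XM\ssc{i}-X\ml0$ for every $i=1,\dots,N$. Any $\{A,B\}\in\conv\Theta$ can be written as $A=\sum_i\lambda_iA\ssc{i}$ and $B=\sum_i\lambda_iB\ssc{i}$ with $\lambda_i\geq0$ and $\sum_i\lambda_i=1$. Because the \emph{same} $K$ is used at every vertex, $A+BK=\sum_i\lambda_i(A\ssc{i}+B\ssc{i}K)=\sum_i\lambda_iM\ssc{i}\eqdef M$; this is the one place the shared controller enters, and it is exactly what makes the closed loop affine in $\{A,B\}$. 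It therefore suffices to prove that $M'XM-X\ml0$ whenever $M$ is a convex combination of the $M\ssc{i}$.

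The key step is the matrix convexity of $M\mapsto M'XM$ for $X\mge0$, which I would establish through the algebraic identity $\sum_i\lambda_iM\ssc{i}'XM\ssc{i}-M'XM=\sum_i\lambda_i(M\ssc{i}-M)'X(M\ssc{i}-M)$, valid precisely because $\sum_i\lambda_i=1$. Since $X\mg0$ the right-hand side is $\mge0$, giving the Jensen-type bound $M'XM\mle\sum_i\lambda_iM\ssc{i}'XM\ssc{i}$. Separately, $\sum_i\lambda_iM\ssc{i}'XM\ssc{i}-X=\sum_i\lambda_i(M\ssc{i}'XM\ssc{i}-X)$, again using $\sum_i\lambda_i=1$, and this is a nonnegatively-weighted sum of the strictly negative-definite matrices $M\ssc{i}'XM\ssc{i}-X$, hence itself $\ml0$. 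Chaining the two relations, $M'XM\mle\sum_i\lambda_iM\ssc{i}'XM\ssc{i}\ml X$, so $M'XM-X\ml0$, which is the desired conclusion.

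There is no deep obstacle here; the one point requiring care is the bookkeeping of strict versus non-strict inequalities. Matrix convexity only delivers the non-strict bound $M'XM\mle\sum_i\lambda_iM\ssc{i}'XM\ssc{i}$, so strictness must be supplied by the per-vertex strict inequalities together with the fact that a convex combination of negative-definite matrices is negative-definite. As an alternative route that sidesteps this entirely, I could instead Schur-complement the Lyapunov inequality against $X\mg0$ into the equivalent strict LMI $\begin{bmatrix}X & M'X\\ XM & X\end{bmatrix}\mg0$, which is affine in $M$; since the feasible set of a strict LMI is convex and contains each $M\ssc{i}$, it contains their convex combination $M$, giving the result immediately.
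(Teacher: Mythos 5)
Your proof is correct, and your primary route is genuinely different from the paper's. The paper's entire argument is the observation that, by the Schur complement (using $X\mg0$), the condition $(A+BK)'X(A+BK)-X\ml0$ is equivalent to the strict LMI $\left[\begin{smallmatrix} X & (A+BK)' \\ A+BK & X^{-1}\end{smallmatrix}\right]\mg0$, which is affine in $(A,B)$ for fixed $K$ and $X$; hence the set of feasible $(A,B)$ is convex and, containing each vertex of $\Theta$, contains $\conv\Theta$. Your closing ``alternative route'' is exactly this argument, up to a congruence by $\diag(I,X)$ that trades the paper's $X^{-1}$ block for your $X$-weighted off-diagonal blocks (your variant has the minor advantage of avoiding $X^{-1}$ altogether). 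Your main argument, by contrast, is an elementary Jensen-type computation: with $M\ssc{i}\defeq A\ssc{i}+B\ssc{i}K$ and $M=\sum_i\lambda_i M\ssc{i}$, the identity $\sum_i\lambda_i M\ssc{i}'XM\ssc{i}-M'XM=\sum_i\lambda_i(M\ssc{i}-M)'X(M\ssc{i}-M)\mge0$ establishes matrix convexity of $M\mapsto M'XM$, and strictness of the final inequality is correctly recovered from the convex combination of the strictly negative definite matrices $M\ssc{i}'XM\ssc{i}-X$, since $\sum_i\lambda_i=1$ forces some $\lambda_i>0$. Both the identity and the strict/nonstrict chaining check out. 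What each approach buys: the paper's proof is shorter and leans on standard LMI machinery (convexity of the feasible set of an affine matrix inequality), while yours is self-contained, needs only $X\mge0$ for the convexity step, and makes explicit the strict-versus-nonstrict bookkeeping that the LMI argument silently absorbs into the convexity of strict-LMI feasible sets.
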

The proof of Theorem \ref{thm:convex_hull} is given in \ref{sec:convex_hull_proof}.
The conditions of Theorem \ref{thm:convex_hull} hold for the common Lyapunov approach in \eqref{eq:common_lyap}, and can be made to hold for Algorithm \ref{alg:sequential_sdp} by introducing an additional Lyapunov stability constraint (with common Lyapunov function) for each sampled system, at the expense of some conservatism.
Second, we observe empirically that Algorithm \ref{alg:sequential_sdp} returns policies that very nearly stabilize the entire region $\confregion$, despite a very modest number of samples $\nummc$ relative to the dimension of the parameter space, c.f., \mysec \ref{sec:simulations}, in particular Figure \ref{fig:samples}.
A number of recent papers have investigated sampling (or grid) based approaches to stability verification of control policies, e.g., \cite{vinogradska2016stability,berkenkamp2017,bobiti2016sampling}.
Understanding why policies from Algorithm \ref{alg:sequential_sdp} generalize effectively to the entire region $\confregion$ is an interesting topic of future research.

\section{Experimental results}

\subsection{Numerical simulations using synthetic systems}\label{sec:simulations}
In this section, we study the infinite horizon LQR problem specified by
\begin{equation*}
\atr = \tpltz(a,a'), \ a = \left[1.01,0.01,0,\dots,0\right]\in\real^{n_x}, \ \btr = I, \ \distcovtr = I, \ Q =  10^{-3}I, \ R = I,
\end{equation*}
where $\tpltz(r,c)$ denotes the Toeplitz matrix with first row $r$ and first colum $c$. 
This is the same problem studied in 
\cite[\refsec 6]{dean2017sample} 
(for $n_x=3$), where it is noted that such dynamics naturally arise in \jack{consensus and distributed averaging problems.}
To obtain problem data $\data$, each \emph{rollout} involves simulating \eqref{eq:trueSystem}, with the true parameters, for $T=6$ time steps, excited by $u_t\sim\normal{0}{I}$ with $x_0=0$.
Note: to facilitate comparison with \cite{dean2017sample}, we too shall assume that $\distcovtr$ is known.
Furthermore, for all experiments $\confregion$ will denote the 95\% confidence region, as in \cite{dean2017sample}.
We compare the following methods of control synthesis:
\textbf{existing methods:}
(i) \nom: standard LQR using the nominal model from the least squares, i.e., $\lbrace \als,\bls\rbrace \defeq \arg\min_{A,B}{\sum}_{\roli=1}^\numrollouts{\sum}_{t=1}^{T}|x^\rolis_t-Ax^\rolis_{t-1}-Bu^\rolis_{t-1}|^2$;
(ii) \wc: optimize for worst-case model (95\% confidence) s.t. robust stability constraints, i.e., the method of \cite[\refsec 5.2]{dean2017sample};
(iii) \mixed: enforce stability constraint from \cite[\refsec 5.2]{dean2017sample}, but optimize performance for the nominal model $\lbrace \als,\bls\rbrace$;
\textbf{proposed method(s):}
(iv) \comlyap: the common Lyapunov relaxation of \ref{eq:common_lyap};
(v) \ours: the method proposed in this paper, i.e., Algorithm~\ref{alg:sequential_sdp};
\textbf{additional new methods:}
(vi) \mixeditr: initialize with the  $H_2/H_\infty$ solution, and apply the iterative optimization method proposed in \mysec~\ref{sec:convex_bound}, c.f., Remark \ref{rem:iterative};
(vii) \mixeds: optimize for the nominal model $\lbrace\als,\bls\rbrace$, enforce stability for the sampled systems in $\sampleset$. 
Before proceeding, we wish to emphasize that the different control synthesis methods have different objectives; a lower cost does not mean that the associated method is `better'. 
This is particularly true for \wc\ which seeks to optimize performance for the worst possible model so as to bound the cost on the true system.

To evaluate \textbf{performance}, we compare the cost of applying a learned policy $K$ to the true system $\paramtr=\lbrace\atr,\btr\rbrace$, to the optimal cost achieved by the optimal controller $\klqr$ (designed using $\paramtr$), i.e., $\cost(K\kpsep\paramtr)/\cost(\klqr\kpsep\paramtr)$.
We refer to this as `LQR suboptimality.'
In Figure \ref{fig:cost_infeas_nx3} we plot LQR suboptimality is shown as a function of the number of rollouts $\numrollouts$, for $n_x=3$.
We make the following observations.
Foremost, the methods that enforce stability `stochastically' (i.e. point-wise), namely \ours\ and \mixeds, attain significantly lower costs than the methods that enforce stability `robustly'.
Furthermore, in situations with very little data, e.g. $\numrollouts=5$, the robust control methods are usually unable to find a stabilizing controller, yet the \ours\ method finds a stabilizing controller in the majority of trials.
Finally, we note that the iterative procedure in \ours\ (and \mixeds) significantly improves on the common-Lyapunov relaxation \comlyap; similarly, \mixeditr\ consistently improves upon \mixed\ (as expected).
\begin{figure}[h]
	\centering
	\includegraphics[width=0.9\linewidth]{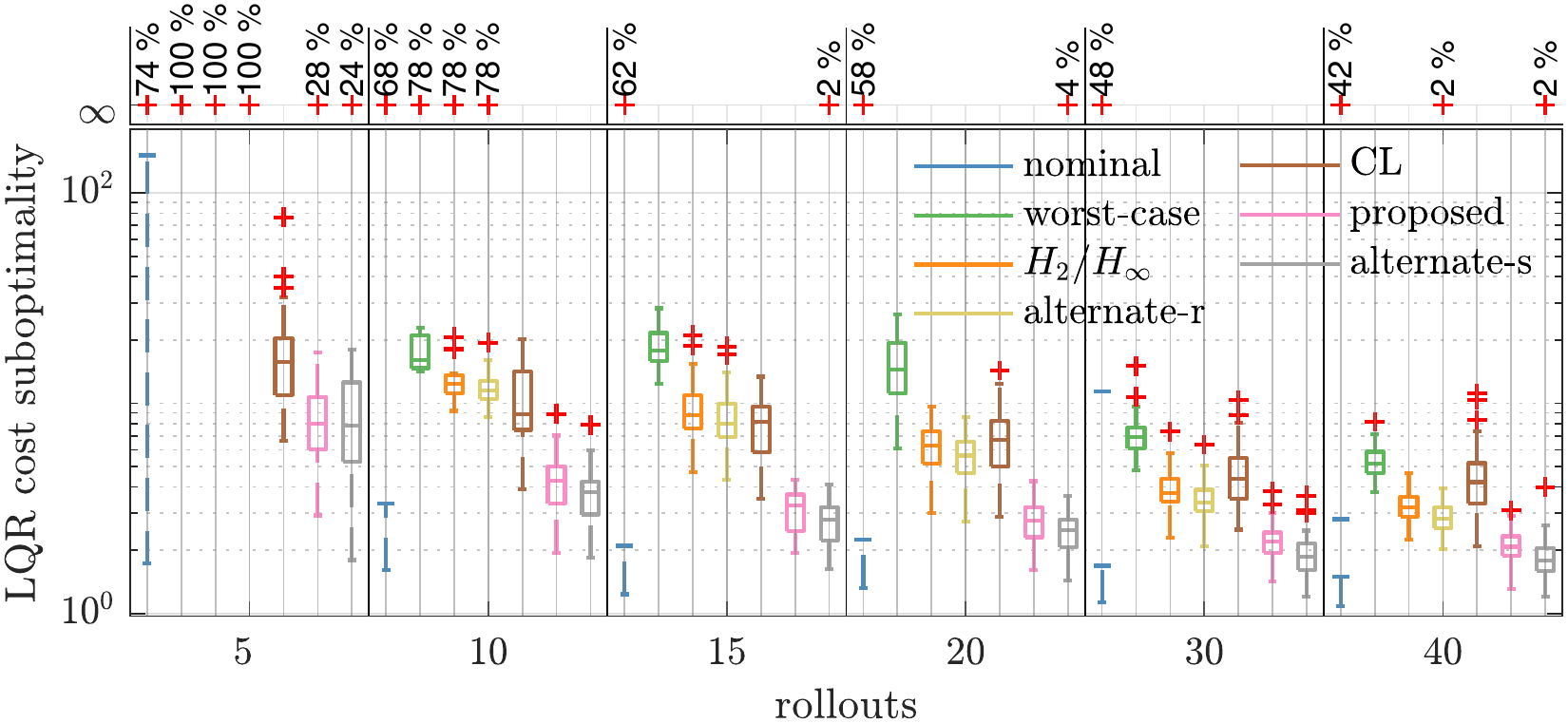}
	\caption{
LQR suboptimality as a function of the number of rollouts (i.e. amount of training data).
$\infty$ suboptimality denotes cases in which the method was unable to find a stabilizing controller for the \emph{true} system (including infeasibility of the optimization problem for policy synthesis),
and the \% denotes the frequency with which this occurred for the 50 experimental trials conducted.	
}
	\label{fig:cost_infeas_nx3}
\end{figure}

It is natural to ask whether the reduction in cost exhibited by \ours\ (and \mixeds) come at the expense of \textbf{robustness}, namely, the ability to stabilize a large region of the parameter space.
Empirical results suggest that this is \emph{not} the case.
To investigate this we sample 5000 fresh (i.e. not used for learning) models from $\confregion\cap\stabilizable$ and check closed-loop stability of each; this is repeated for 50 independent experiments with varying $n_x$ and $\numrollouts=50$. 
The median percentage of models that were \emph{unstable} in closed-loop is recorded in Table \ref{tab:stability_fixt}. 
We make two observations:
(i) the \ours\ method exhibits strong robustness. 
Even for $n_x=12$ (i.e. 288-dim parameter space), it stabilizes $>99$\% of samples from the confidence region, with only $\nummc=100$ MC samples.
(ii) when the robust methods (\wc, \mixed, \mixeditr) are feasible, the resulting policies were found to stabilize 100\% of samples; however, for $n_x=12$, the methods were infeasible almost half the time, whereas \ours\ always returned a policy.
Further evidence is provided in Figure \ref{fig:samples}, which plots robustness and performance as a function of the number of MC samples, $\nummc$.
For $n_x=3$ and $\nummc\geq800$, the entire confidence region is stabilized with very high probability, suggesting that $\nummc\rightarrow\infty$ is not required for robust stability in practice.

\begin{table}[t]
	\caption{Median \% of unstable closed-loop models, with open-loop models sampled from the 95\% confidence region of the posterior, for system of varying dimension $n_x$; c.f. \mysec~\ref{sec:simulations} for details.
	Parenthesized quantities denote the \% of cases for which the policy synthesis optimization problem was infeasible (i.e. no policy was returned).
	50 experiments were conducted, with $\numrollouts=50$.
	\mixed\ and \mixeditr\ have the same robustness guarantees as \wc, and are omitted.}
	\label{tab:stability_fixt}
	\centering
	\begin{tabular}{llllllll}
	\toprule
	$n_x$ & \opt & \nom & \wc  & \comlyap & \textbf{\ours}  & \mixeds \\
	\midrule
	3     &  61.6 (0) & 28.75 (0)  & 0 (0) & 0 (0) & 0.10 (0) & 1.35 (0)   \\
	6     &  95.37 (0) & 58.41 (0) & 0 (0) & 0 (0) & 0.18 (0) & 1.76 (0)   \\
	9     &  99.6 (0) & 81.9 (0)  & 0 (0) & 0 (0) & 0.24 (0) & 1.40 (0)   \\
	12   &  100 (0) & 94.28 (0) & 0 (46.0) & 0 (0) & 0.27 (0) & 1.27 (0)   \\
	\bottomrule
\end{tabular}
\end{table}

\begin{figure}
	\centering
		\includegraphics[width=0.9\linewidth]{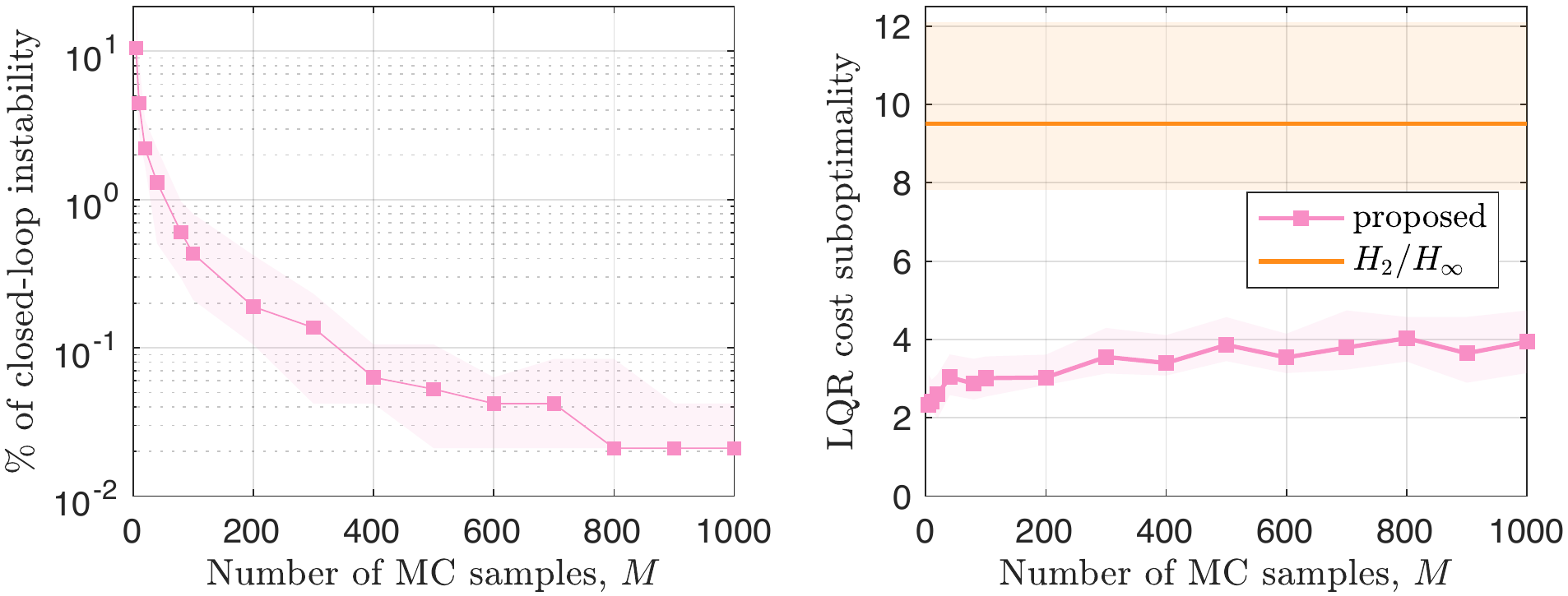}
	\caption{(left) Median \% of unstable closed-loop models, with open-loop models sampled from the 95\% confidence region of the posterior, for $n_x=3$ and $\numrollouts=15$, as a function of the number of samples $\nummc$ used in the MC approximation \eqref{eq:mc_cost}. (right) LQR suboptimality as a function of $\nummc$. 
	50 experiments were conducted, c.f. \mysec \ref{sec:simulations} for details. 
	Shaded regions cover the interquartile range.}
	\label{fig:samples}
\end{figure}

\subsection{Real-world experiments on a rotary inverted pendulum}\label{sec:pendulum}
We now apply the proposed algorithm to the classic problem of stabilizing a (rotary) inverted pendulum, on real (i.e. physical) hardware (Quanser QUBE 2), c.f. \ref{sec:pendulum_details} for details.
To generate training data, the superposition of a non-stabilizing control signal and a sinusoid of random frequency is applied to the rotary arm motor while the pendulum is inverted.
The arm and pendulum angles (along with velocities) are sampled at 100Hz until the pendulum angle exceeds $20^\circ$, which takes no more than 5 seconds. 
This constitutes one rollout.
We applied the \wc, \mixed, and \ours\ methods to optimize the LQ cost with $Q=I$ and $R=1$.
To generate bounds $\epsilon_A\geq\norm{\als-\atr}{2}$ and $\epsilon_B\geq\norm{\bls-\btr}{2}$ for \wc\ and \mixed, we sample 
$\groupparam{A\ssc{i},B\ssc{i}}_{i=1}^{5000}$ 
from the 95\% confidence region of the posterior, using Gibbs sampling, and take $\epsilon_A=\max_i \norm{\als-A\ssc{i}}{2}$ and $\epsilon_B=\max_i \norm{\bls-B\ssc{i}}{2}$.
The \ours\ method used 100 such samples for synthesis.
We also applied the \emph{least squares policy iteration} method \cite{lagoudakis2003least}, but none of the policies could stabilize the pendulum given the amount of training data.
Results are presented in Figure \ref{fig:pendulum_costs}, from which we make the following remarks.
First, as in \mysec \ref{sec:simulations}, the \ours\ method achieves high performance (low cost), especially in the low data regime where the magnitude of system uncertainty renders the other synthesis methods infeasible.
Insight into this performance is offered by Figure \ref{fig:pendulum_costs}(b), which indicates that policies from the \ours\ method stabilize the pendulum with control signals of smaller magnitude.
Finally, performance of the \ours\ method converges after very few rollouts.
Data-inefficiency is a well-known limitation of RL; understanding and mitigating this inefficiency is the subject of considerable research \cite{dean2017sample,tu2017least,deisenroth2011pilco,schulman2015high,gu2016q,gu2016continuous}.
Investigating the role that a Bayesian approach to uncertainty quantification plays in the apparent sample-efficiency of the proposed method is an interesting topic for further inquiry.

\begin{figure}
	\centering
	\subfloat[]{
		\includegraphics[width=0.45\linewidth]{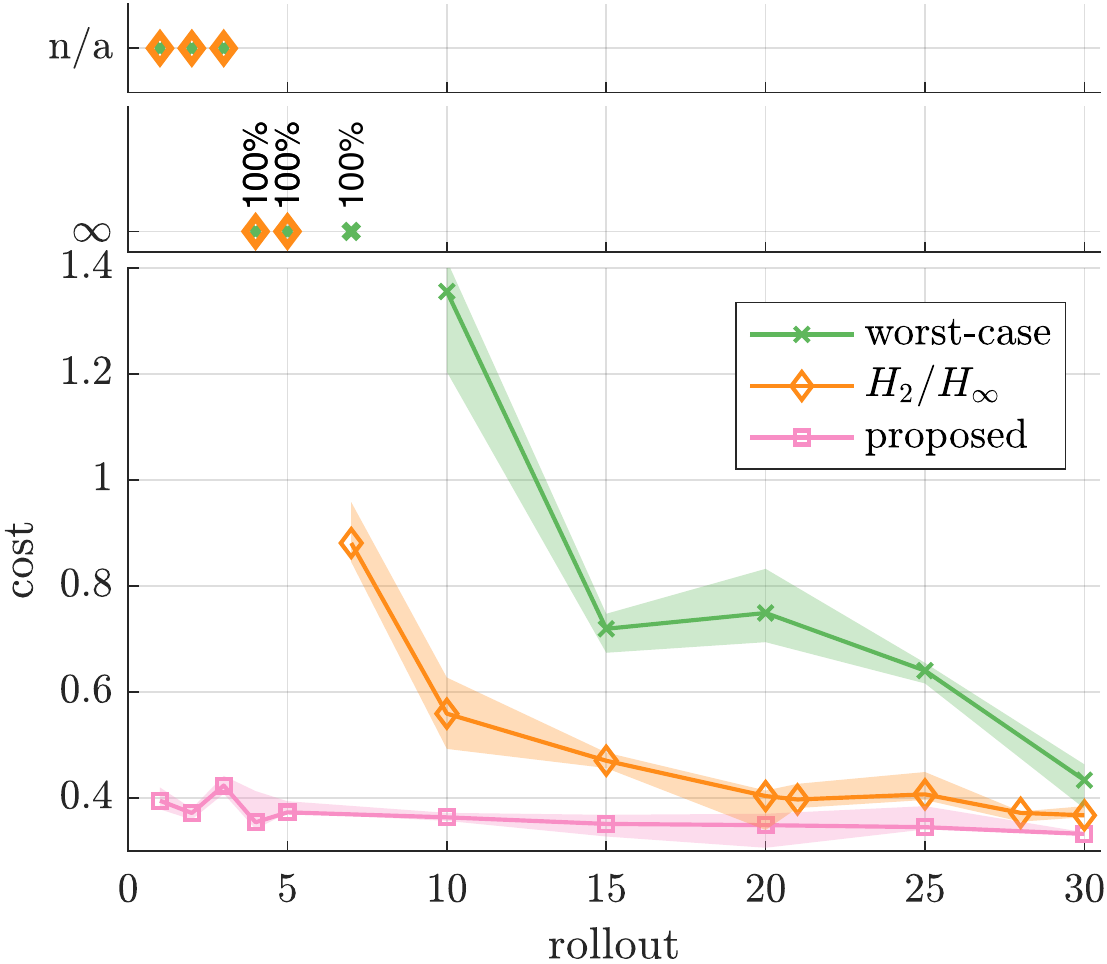}} 
	\subfloat[]{
		\includegraphics[width=0.45\linewidth]{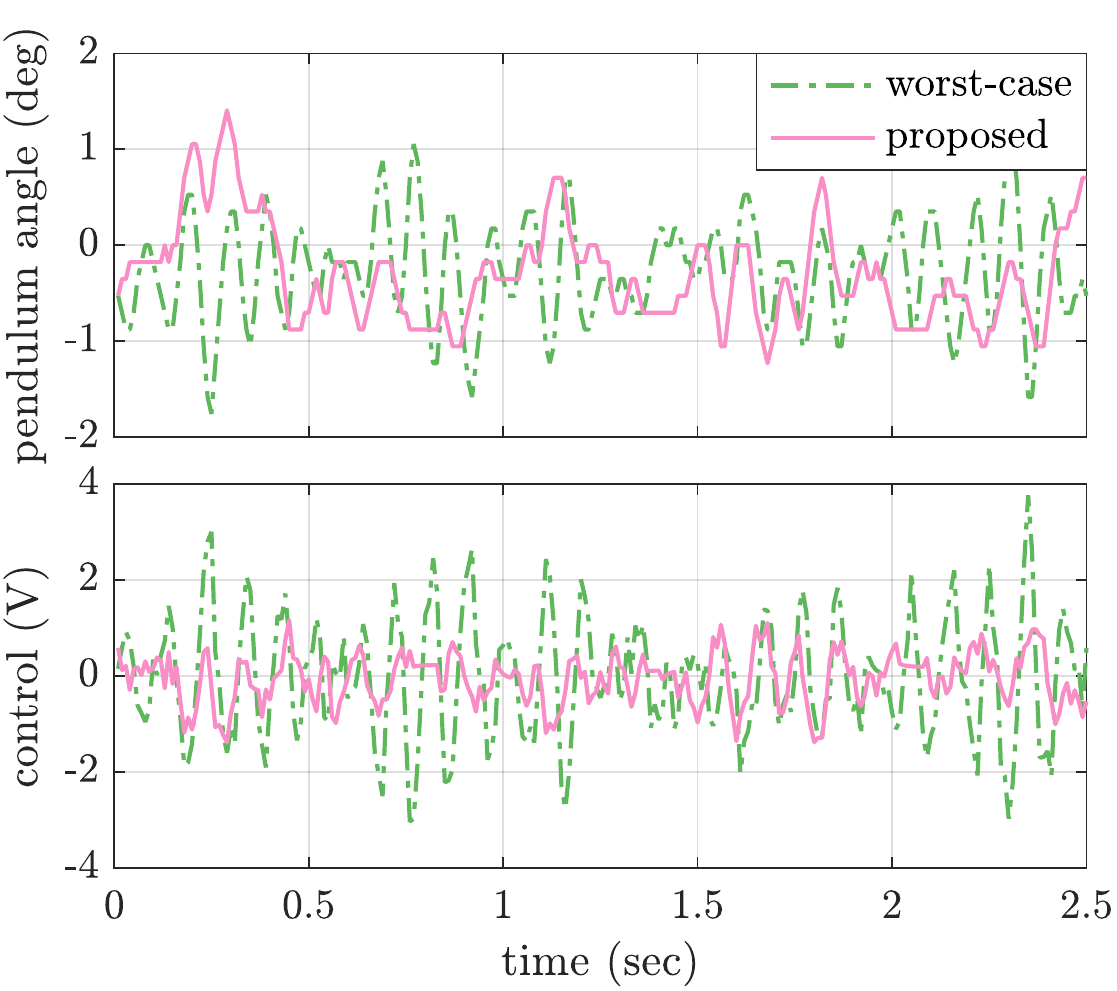}}
	\caption{(a) LQR cost on real-world pendulum experiment, as a function of the number of rollouts. $\infty$ cost denotes controllers that resulted in instability during testing. 
		n/a denotes cases in which the synthesis problem was infeasible. (b) pendulum angle and control signal recorded after 10 rollouts.}
	\label{fig:pendulum_costs}
\end{figure}

\subsubsection*{Acknowledgments}
This research was financially supported by the Swedish Foundation for Strategic Research (SSF) via the project \emph{ASSEMBLE} (contract number: RIT15-0012) and via the projects \emph{Learning flexible models for nonlinear dynamics} (contract number: 2017-03807) and \emph{NewLEADS - New Directions in Learning Dynamical Systems} (contract number: 621-2016-06079), both funded by the Swedish Research Council.

\medskip
\small


\clearpage

\appendix

\section{Supplementary material}


\subsection{Sampling from the posterior distribution}

\subsubsection{Case I: $\distcovtr$ known}\label{sec:inference_known}
First, consider the case where $\distcovtr$ is known; i.e., $\param=\groupparam{A,B}$.
From Bayes' rule, we have
\begin{equation}
\posterior(\param) \defeq p(\param|\data) 
=  \frac{1}{p(\data)}p(\data|\param)p(\param) 
\propto p(\param){\prod}_{i=1}^\numgendata p(x_{+}^i|x_{-}^i,u^i,\param)
\eqdef \uposterior(\param), 
\end{equation}
where $\data=\lbrace x_{+}^i,x_{-}^i,u^i\rbrace_{i=1}^\numgendata$ and $p(x_{+}^i|x_{-}^i,u^i,\param)=\normal{x_{+}^i-Ax_{-}^i-Bu^i}{\distcov}$.
When $\distcov$ is known, we can express the likelihood $p(\data|\param)$ in a form equivalent to an (un-normalized) Gaussian distribution over $\param$, i.e., 
\begin{subequations}
	\begin{align}
	&{\prod}_{i=1}^\numgendata p(x_{+}^i|x_{-}^i,u^i,\param)  \\
	&\propto \exp\left(-\frac{1}{2}\sum_i |\xplus^i-A\xminus^i-Bu^i|_{\distcov^{-1}}^2 \right) \\
	&=  \exp\left(-\frac{1}{2}\sum_i |\xplus^i-\xu^i\paramab|_{\distcov^{-1}}^2 \right) \\
	&= \exp\left(-\frac{1}{2}\left( \paramab'\left(\sum_i\xu^{i'}\distcov^{-1}\xu^i\right)\paramab  - 2\paramab'\sum_i\xu^{i'}\distcov^{-1}\xplus^i + \sum_i\xplus^{i'}\distcov^{-1}\xplus^i \right)  \right) \\
	&\propto \ \normal{\mu}{\Sigma},
	\end{align}
\end{subequations}
where $\Sigma= \left(\sum_i\xu^{i'}\distcov^{-1}\xu^i\right)^{-1}$, $\mu=\Sigma\left(\sum_i\xu^{i'}\distcov^{-1}\xplus^i\right)$, $\paramab = \textup{vec}\left(A';B'\right)$, and $\xu^i=I_{n_x}\otimes[\xminus^{i'} \ u^{i'}]$.
This implies that $\posterior(\param)=\normal{\mu}{\Sigma}p(\param)$.
Therefore, when the prior $p(\param)$ is non-informative ($p(\param)\propto1$) or Gaussian (self-conjugate), the posterior is also Gaussian.

\subsubsection{Case II: $\distcovtr$ unknown}\label{sec:inference_unknown}
Next, consider the generic case in which all parameters are unknown.
Then $\param=\groupparam{A,B,\distcov}$.
One approach to sampling from the posterior involves Gibbs sampling \cite{carter1994gibbs}, i.e., alternating between the following two sampling steps:
\begin{align}
\groupparam{A_k,B_k}&\sim p(A,B|\distcov_{k-1},\data), \\
\distcov_k &\sim p(\distcov|A_k,B_k,\data)
\end{align}
to form the Markov Chain $\groupparam{A_k,B_k,\distcov_k}_{k=1}^\infty$.
As demonstrated in \ref{sec:inference_known}, the distribution $p(A,B|\distcov_{k-1},\data)$ is Gaussian, so sampling is straightforward.
To sample from $p(\distcov|A_k,B_k,\data)$, first note
\begin{equation}
p(\distcov|A,B,\data)\propto p(\data|A,B,\distcov)p(\distcov).
\end{equation}
Observe that
\begin{align*}
p(\data|A,B,\distcov) \propto& \ \frac{1}{\det(\distcov)^{\frac{\numgendata}{2}}}\exp\left(-\frac{1}{2}\sum_i |\xplus^i-A\xminus^i-Bu^i|_{\distcov^{-1}}^2 \right) \\
=& \ \frac{1}{\det(\distcov)^{\frac{\numgendata}{2}}}\exp\left(-\frac{1}{2}\trace{\wishmat\distcov^{-1}} \right), \ \wishmat\defeq \sum_i (\xplus^i-A\xminus^i-Bu^i)(\xplus^i-A\xminus^i-Bu^i)' \\
\propto& \ \wishart{\wishmat}{\wishdof}, \wishdof = \numgendata - n_x - 1,
\end{align*} 
where $\wishart{\cdot}{\cdot}$ denotes the inverse Wishart distribution.
Note, if $\numgendata\leq n_x+1$ then $\wishdof$ is not valid.
However, we may consider a prior $p(\distcov)$ such as $p(\distcov)\propto\det(\distcov)^{-\frac{n_x+1}{2}}$ (Jeffreys' noninformative prior) which means
\begin{equation}
p(\distcov|A,B,\data)\propto \frac{1}{\det(\distcov)^{\frac{\numgendata+n_x+1}{2}}}\exp\left(-\frac{1}{2}\trace{\wishmat\distcov^{-1}} \right) \propto \wishart{\wishmat}{\wishdof},
\end{equation}
where $\wishdof = \numgendata > 0$.
This is a well-defined inverse Wishart distribution, sampling from which is straightforward.

\subsection{Proofs}

\subsubsection{Proof of Lemma \ref{lem:taylor}}\label{sec:taylor_proof}

\begin{lemma*}
	$\taylor(S,S_0) \mle S^{-1}$ for all $S,S_0\in\sym{n}_{++}$, where $\taylor(S,S_0)$ denotes the first-order Taylor series expansion of $S^{-1}$ about $S_0$. 
\end{lemma*}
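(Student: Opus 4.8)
The plan is to establish the inequality by a direct completing-the-square (Gram matrix) argument. This is self-contained and elementary; it sidesteps having to invoke the operator convexity of the matrix inverse, although that would in fact give the result immediately, since the first-order Taylor expansion of an operator-convex function is a global operator-underestimator, and $S\mapsto S^{-1}$ is operator convex on $\sym{n}_{++}$. I prefer the explicit factorization both for rigor and because it makes the appearance of $\taylor$ transparent.

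First I would write the Taylor term out explicitly. Since $\taylor(S,S_0) = S_0^{-1} - S_0^{-1}(S-S_0)S_0^{-1}$, expanding the middle factor and using $S_0^{-1}S_0 S_0^{-1}=S_0^{-1}$ gives $\taylor(S,S_0) = 2S_0^{-1} - S_0^{-1}SS_0^{-1}$. Hence the quantity I must show is positive semidefinite is
\[
S^{-1} - \taylor(S,S_0) = S^{-1} - 2S_0^{-1} + S_0^{-1}SS_0^{-1}.
\]
Next, using that $S,S_0\in\sym{n}_{++}$ admit (unique) symmetric positive-definite square roots and inverses, I would exhibit this right-hand side as a Gram matrix. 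Setting $X \defeq S^{1/2}S_0^{-1} - S^{-1/2}$ and computing $X'X$ (noting $S_0^{-1}$, $S^{1/2}$, $S^{-1/2}$ are all symmetric), the two cross terms each collapse to $-S_0^{-1}$ via $S^{1/2}S^{-1/2}=I$, and the outer terms give $S_0^{-1}SS_0^{-1}$ and $S^{-1}$, so that
\[
X'X = S_0^{-1}SS_0^{-1} - 2S_0^{-1} + S^{-1}.
\]
This is exactly $S^{-1}-\taylor(S,S_0)$. Since $X'X\mge 0$ for any square matrix $X$, the desired inequality $\taylor(S,S_0)\mle S^{-1}$ follows at once.

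I do not anticipate a genuine obstacle: the one nonroutine step is guessing the correct factor $X = S^{1/2}S_0^{-1} - S^{-1/2}$, after which the proof reduces to a single line of verification. The only points requiring a word of care are that the positive definiteness of both $S$ and $S_0$ is what guarantees the inverses and symmetric square roots appearing in $X$ are well defined, and that the equality $\taylor(S,S_0)=2S_0^{-1}-S_0^{-1}SS_0^{-1}$ is recorded first so the reader can match $X'X$ against $S^{-1}-\taylor(S,S_0)$ term by term.
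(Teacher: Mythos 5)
Your proof is correct, and it takes a genuinely different route from the paper's. The paper factorizes $D = S - S_0 = L'L$ via a Cholesky decomposition and then applies the Woodbury inversion identity to $(S_0 + L'L)^{-1}$, extracting the bound from $S_0^{-1}L'(I + LS_0^{-1}L')^{-1}LS_0^{-1} \mle S_0^{-1}L'LS_0^{-1}$. You instead verify directly that $S^{-1} - \taylor(S,S_0) = S^{-1} - 2S_0^{-1} + S_0^{-1}SS_0^{-1} = X'X$ with $X \defeq S^{1/2}S_0^{-1} - S^{-1/2}$; your algebra checks out, since $(S^{1/2}S_0^{-1})' = S_0^{-1}S^{1/2}$ and both cross terms collapse to $-S_0^{-1}$. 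Beyond avoiding Woodbury, your argument buys something concrete: the paper's proof as written needs $S - S_0 \mge 0$ for a real Cholesky factor $L$ to exist, yet the lemma is stated (and used in Theorem \ref{thm:bound}) for arbitrary $S, S_0 \in \sym{n}_{++}$, where $S - S_0$ can be indefinite; your Gram-matrix factorization imposes no such restriction, so it actually closes a gap in the paper's proof rather than restating it. (The paper's route can be repaired, e.g.\ via an indefinite factorization $D = L'\Sigma L$ with a sign matrix $\Sigma$, but your one-line fix is cleaner.) Your side remark is also sound: $S \mapsto S^{-1}$ is operator convex on $\sym{n}_{++}$, and for any operator convex $f$ the first-order expansion $f(S_0) + Df(S_0)[S-S_0]$ is a global underestimator in the Loewner order (divide the convexity inequality by $t$ and let $t \downarrow 0$, using closedness of the semidefinite cone), which yields the lemma immediately; the explicit factorization is nonetheless preferable here for self-containedness.
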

\begin{proof}
	Let $D=S-S_0=L'L$, i.e, $L$ is the Cholesky factorization of $D$. Then,
	\begin{align*}
	S_0\mg0 & \iff
	S_0^{-1}\mg0 \implies
	LS_0^{-1}L' \mge 0 \iff 
	I+LS_0^{-1}L' \mge I \iff 
	(I+LS_0^{-1}L')^{-1}  \mle I \\ &\iff
	S_0^{-1}L'(I+LS_0^{-1}L')^{-1} LS_0^{-1} \mle S_0^{-1}L'LS_0^{-1} \\ & \iff 
	S_0^{-1} - S_0^{-1}L'(I+LS_0^{-1}L')^{-1} LS_0^{-1}  \mge S_0^{-1} - S_0^{-1}DS_0^{-1} \\ & \iff   
	\left(S_0 + L'L\right)^{-1}  \mge S_0^{-1} - S_0^{-1}\left(S-S_0\right)S_0^{-1} \iff  
	S^{-1}   \mge \taylor(S,S_0), 
	\end{align*}
	where the penultimate implication
	invokes
	the Woodbury matrix inversion identity \cite[eq. 159]{matrixcookbook}.
\end{proof}

\subsubsection{Proof of Theorem \ref{thm:bound}}\label{sec:bound_proof}
\begin{theorem*}
	Let $\mccostbound(K,\nomk)$ be defined as in \eqref{eq:bound}, \discuss{with $\nomk$ such that $\mccost(\nomk)$ is finite.} 
	Then $\mccostbound(K,\nomk)$ is a convex upper bound on $\mccost(K)$, i.e., $\mccostbound(K,\nomk)\geq\mccost(K) \ \forall K$.
	Furthermore, the bound is `tight' at $\nomk$, i.e., $\mccostbound(\nomk,\nomk)=\mccost(\nomk)$.
\end{theorem*}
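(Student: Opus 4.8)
The plan is to prove the three claims --- upper bound, convexity, and tightness --- one summand at a time, since $\mccostbound(K,\nomk)$ and $\mccost(K)$ are averages over $\param\ssc{i}\in\sampleset$ of $\costbound(K,\nomk\kpsep\param\ssc{i})$ and $\cost(K\kpsep\param\ssc{i})$ respectively, so any inequality or equality established term-by-term survives the averaging. Throughout I would use the representations \eqref{eq:localCostAlt} and \eqref{eq:localBound}, which express both quantities as the minimum of the common linear objective $\trace{\lyap\ssc{i}\distcov\ssc{i}}$ over a feasible set cut out by a matrix inequality; the two feasible sets differ only in their $(2,2)$ block, namely $\lyap\ssc{i}^{-1}$ in \eqref{eq:localCost_nlmi} versus its linearization $\taylor(\lyap\ssc{i},\nomlyap\ssc{i})$ in \eqref{eq:bound_lmi}. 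Note the finiteness hypothesis on $\nomk$ guarantees $A\ssc{i}+B\ssc{i}\nomk\in\schur$ for every $i$, so the nominal minimizer $\nomlyap\ssc{i}\in\sympos{n_x}$ exists and $\taylor(\cdot,\nomlyap\ssc{i})$ is well defined.

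For the \textbf{upper bound} I would show the feasible set of $\costbound(K,\nomk\kpsep\param\ssc{i})$ is contained in that of $\cost(K\kpsep\param\ssc{i})$. Fix any $\lyap\ssc{i}$ feasible for \eqref{eq:bound_lmi}. The constraint matrix in \eqref{eq:localCost_nlmi} equals that in \eqref{eq:bound_lmi} plus $\diag\big(0,\,\lyap\ssc{i}^{-1}-\taylor(\lyap\ssc{i},\nomlyap\ssc{i}),\,0\big)$, and Lemma~\ref{lem:taylor} gives $\lyap\ssc{i}^{-1}-\taylor(\lyap\ssc{i},\nomlyap\ssc{i})\mge0$; hence the matrix of \eqref{eq:localCost_nlmi} is a sum of two positive semidefinite matrices and is $\mge0$, so $\lyap\ssc{i}$ is feasible for \eqref{eq:localCostAlt}. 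Minimizing the same objective over the smaller set can only raise the optimal value, giving $\costbound(K,\nomk\kpsep\param\ssc{i})\geq\cost(K\kpsep\param\ssc{i})$ (this remains valid, with value $+\infty$, when the bound problem is infeasible). Averaging over $\sampleset$ yields $\mccostbound(K,\nomk)\geq\mccost(K)$.

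For \textbf{convexity} I would observe that, with $\nomlyap\ssc{i}$ fixed, the linearization $\taylor(\lyap\ssc{i},\nomlyap\ssc{i})=\nomlyap\ssc{i}^{-1}-\nomlyap\ssc{i}^{-1}(\lyap\ssc{i}-\nomlyap\ssc{i})\nomlyap\ssc{i}^{-1}$ is affine in $\lyap\ssc{i}$, so every block of the matrix in \eqref{eq:bound_lmi} is jointly affine in $(K,\lyap\ssc{i})$; thus \eqref{eq:bound_lmi} together with $\lyap\ssc{i}\in\sympos{n_x}$ defines a convex (spectrahedral) set in $(K,\lyap\ssc{i})$. Since $\costbound(K,\nomk\kpsep\param\ssc{i})$ is the partial minimization over $\lyap\ssc{i}$ of the jointly convex (indeed linear) objective $\trace{\lyap\ssc{i}\distcov\ssc{i}}$ over this convex set, it is convex in $K$; averaging preserves convexity, so $\mccostbound(\cdot,\nomk)$ is convex.

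For \textbf{tightness} I would use the defining property of $\nomlyap\ssc{i}$, namely that it attains the minimum in \eqref{eq:localCostAlt} at $K=\nomk$, so $\cost(\nomk\kpsep\param\ssc{i})=\trace{\nomlyap\ssc{i}\distcov\ssc{i}}$ and $\nomlyap\ssc{i}$ satisfies \eqref{eq:localCost_nlmi} at $K=\nomk$. Evaluating the linearization at its own base point gives $\taylor(\nomlyap\ssc{i},\nomlyap\ssc{i})=\nomlyap\ssc{i}^{-1}$, so at $(K,\lyap\ssc{i})=(\nomk,\nomlyap\ssc{i})$ the matrix in \eqref{eq:bound_lmi} coincides exactly with that of \eqref{eq:localCost_nlmi} and is $\mge0$; hence $\nomlyap\ssc{i}$ is feasible for $\costbound(\nomk,\nomk\kpsep\param\ssc{i})$ with objective value $\trace{\nomlyap\ssc{i}\distcov\ssc{i}}=\cost(\nomk\kpsep\param\ssc{i})$, giving $\costbound(\nomk,\nomk\kpsep\param\ssc{i})\leq\cost(\nomk\kpsep\param\ssc{i})$. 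Combined with the upper bound this forces equality, and averaging yields $\mccostbound(\nomk,\nomk)=\mccost(\nomk)$. The main obstacle I anticipate is getting the direction of the feasible-set inclusion correct --- replacing the $(2,2)$ block by a \emph{smaller} matrix \emph{shrinks} the feasible set and hence \emph{raises} the minimum --- together with interpreting the inequality correctly (as $+\infty$) on the set of $K$ where the problems are infeasible or unstable; the remaining steps are essentially structural.
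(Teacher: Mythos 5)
Your proof is correct and follows essentially the same route as the paper's: argue per sample, obtain the upper bound from Lemma~\ref{lem:taylor} via feasible-set containment (every $\lyap\ssc{i}$ feasible for \eqref{eq:bound_lmi} is feasible for \eqref{eq:localCost_nlmi}, so the bound problem minimizes over a smaller set), and obtain tightness by checking that $\nomlyap\ssc{i}$ is feasible for \eqref{eq:bound_lmi} at $K=\nomk$ because $\taylor(\nomlyap\ssc{i},\nomlyap\ssc{i})=\nomlyap\ssc{i}^{-1}$. The only divergence is the convexity step, where your partial-minimization argument (a linear objective minimized over a set that is jointly affine in $(K,\lyap\ssc{i})$, with value $+\infty$ on infeasibility) is the standard and cleaner justification than the paper's remark about suprema of families of convex functions; this, together with your explicit PSD-decomposition of the constraint gap and your care about the infeasible/unstable case, tightens details the paper leaves implicit without changing the argument.
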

\begin{proof}
		We will prove that $\costbound(K,\nomk\kpsep\param\ssc{i})$ is a tight convex bound on $\cost(K\kpsep\param\ssc{i})$, as this implies that
		$\mccostbound(K,\nomk)\defeq \frac{1}{\nummc}\sum_i\costbound(K,\nomk\kpsep\param\ssc{i})$ is a tight convex bound on $\mccost(K)\defeq\frac{1}{\nummc}\sum_i\cost(K\kpsep\param\ssc{i})$.
		First, we prove convexity.  
		$\costbound(K,\nomk\kpsep\param\ssc{i})$ is defined as the supremum over an infinite family of convex functions over a compact convex set, and is therefore a itself convex function.
		Note: the minimum of a linear function, e.g. $\min_{\lyap\ssc{i}\in\sympos{n_x}} \trace{\lyap\ssc{i}\bw\ssc{i}\bw\ssc{i}'} $  can be trivially expressed as the supremum of a convex function, i.e., $\sup_{\lyap\ssc{i}\in\sympos{n_x}} -\trace{\lyap\ssc{i}\bw\ssc{i}\bw\ssc{i}'} $.
		Next, we prove the upper bound.
		From Lemma \ref{lem:taylor}, $\lyap\ssc{i}^{-1}\mge \taylor_i(\lyap\ssc{i},\nomlyap\ssc{i})$ for all $\lyap\ssc{i}\in\sympos{n_x}$.
		Therefore, any $\lyap\ssc{i}\in\sympos{n_x}$ that satisfies \eqref{eq:bound_lmi} also satisfies \eqref{eq:localCost_nlmi}.
		This means the feasible set of \eqref{eq:localBound} is a subset of the feasible set of \eqref{eq:localCostAlt}, hence $\costbound(K,\nomk\kpsep\param\ssc{i})\geq\cost(K\kpsep\param\ssc{i})$.
		Finally, we prove tightness.
		As we have already proved $\costbound(K,\nomk\kpsep\param\ssc{i})\geq\cost(K\kpsep\param\ssc{i})$, it suffices to prove that $\nomx\ssc{i}$ is a feasible solution to \eqref{eq:bound_lmi}.
		As $\taylor(\nomx\ssc{i},\nomx\ssc{i})=\nomx\ssc{i}^{-1}$, for $K=\nomk$ and $\lyap\ssc{i}=\nomx\ssc{i}$ \eqref{eq:bound_lmi} is equivalent to \eqref{eq:localCost_nlmi}, which is feasible by definition of $\nomx\ssc{i}$.
		Hence, $\nomx\ssc{i}$ is a feasible solution of \eqref{eq:localBound}, that achieves $\trace{\nomx\ssc{i}\bw\ssc{i}\bw\ssc{i}'} = \cost(\nomk\kpsep\param\ssc{i})$, by definition of $\nomx\ssc{i}$.
\end{proof}

\subsubsection{Proof of Theorem \ref{thm:convex_hull}}\label{sec:convex_hull_proof}

\begin{theorem*}
	Suppose there exists $K\in\real^{n_x\times n_u}$ such that $(A\ssc{i}+B\ssc{i}K)'X(A\ssc{i}+B\ssc{i}K)-X\ml0$ for $X\mg0$ and all $\Theta=\lbrace A\ssc{i},B\ssc{i}\rbrace_{i=1}^N$.
	Then $(A+BK)'X(A+BK)-X\ml0$ for all $\lbrace A,B\rbrace\in\conv\Theta$, where $\conv\Theta$ denotes the convex hull of $\Theta$.
\end{theorem*}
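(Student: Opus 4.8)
The plan is to exploit the fact that, because $X\mg0$, the matrix-valued map $M\mapsto M'XM$ is (matrix) convex; this is precisely what allows a single Lyapunov certificate $X$ to survive passage from the vertices to their convex hull. Accordingly, I would first fix an arbitrary point $\lbrace A,B\rbrace\in\conv\Theta$ and write it explicitly as a convex combination of the vertices, $A=\sum_{i=1}^N\lambda_i A\ssc{i}$ and $B=\sum_{i=1}^N\lambda_i B\ssc{i}$ with $\lambda_i\geq0$ and $\sum_i\lambda_i=1$. Since the gain $K$ is common to all vertices, the closed-loop matrix factors linearly, $A+BK=\sum_i\lambda_i(A\ssc{i}+B\ssc{i}K)$. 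Writing $M\ssc{i}\defeq A\ssc{i}+B\ssc{i}K$ and $M\defeq\sum_i\lambda_i M\ssc{i}$, the goal reduces to showing $M'XM\ml X$, given the hypothesis $M\ssc{i}'XM\ssc{i}\ml X$ for every $i$.

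The key step is a matrix Jensen inequality asserting $M'XM\mle\sum_i\lambda_i M\ssc{i}'XM\ssc{i}$. To establish it I would set $W\defeq X^{1/2}$ (well defined since $X\mg0$) and $N\ssc{i}\defeq WM\ssc{i}$, so that $M\ssc{i}'XM\ssc{i}=N\ssc{i}'N\ssc{i}$ and $M'XM=(\sum_i\lambda_i N\ssc{i})'(\sum_i\lambda_i N\ssc{i})$. Testing against an arbitrary vector $v$, the claim becomes $|\sum_i\lambda_i N\ssc{i}v|^2\leq\sum_i\lambda_i|N\ssc{i}v|^2$, which is exactly Jensen's inequality for the convex function $z\mapsto|z|^2$ applied to the discrete distribution with weights $\lambda_i$. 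This is the technical heart of the argument, though it is elementary once phrased in this form.

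Finally I would combine the two bounds. Multiplying each hypothesis $X-M\ssc{i}'XM\ssc{i}\mg0$ by $\lambda_i\geq0$ and summing yields $\sum_i\lambda_i M\ssc{i}'XM\ssc{i}\ml X$. The point requiring care, and the one I expect to be the only genuine subtlety, is that the strict inequality must survive the convex combination: this holds because $\sum_i\lambda_i=1$ forces at least one weight $\lambda_j>0$, so the sum $\sum_i\lambda_i(X-M\ssc{i}'XM\ssc{i})$ is a sum of positive semidefinite terms containing at least one strictly positive definite term, and is therefore positive definite. Chaining the Jensen bound with this strict inequality gives $M'XM\mle\sum_i\lambda_i M\ssc{i}'XM\ssc{i}\ml X$, that is $(A+BK)'X(A+BK)-X\ml0$, which is the desired conclusion for every $\lbrace A,B\rbrace\in\conv\Theta$.
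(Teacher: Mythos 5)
Your proof is correct, but it takes a genuinely different route from the paper's. You argue directly on the convex combination: writing $A+BK=\sum_i\lambda_i(A\ssc{i}+B\ssc{i}K)$, you establish the operator Jensen inequality $M'XM\mle\sum_i\lambda_i M\ssc{i}'XM\ssc{i}$ via the factorization $W=X^{1/2}$ and scalar Jensen for $z\mapsto|z|^2$, then chain it with the weighted sum of the strict vertex inequalities, taking care that strictness survives (at least one $\lambda_j>0$, so the weighted sum of positive semidefinite terms containing a positive definite one is itself positive definite) — all steps check out. The paper instead invokes the Schur complement: for fixed $K$ and $X\mg0$, the condition $(A+BK)'X(A+BK)-X\ml0$ is equivalent to $\left[\begin{smallmatrix} X & (A+BK)' \\ A+BK & X^{-1}\end{smallmatrix}\right]\mg0$, whose matrix is affine in $(A,B)$; hence the set of $(A,B)$ satisfying the Lyapunov inequality is convex, and since it contains the vertices $\Theta$ it contains $\conv\Theta$. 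The paper's argument is shorter and yields the structurally stronger observation that the stability condition is LMI-representable in $(A,B)$ for fixed $(K,X)$ — i.e., the \emph{entire} feasible set is convex, consonant with the paper's SDP machinery — whereas yours is more elementary and self-contained: it avoids $X^{-1}$ and the Schur complement entirely, makes the underlying mechanism (matrix convexity of $M\mapsto M'XM$ for $X\mge0$) explicit, and is more careful about why the strict inequality is preserved under convex combination, a point the paper's one-line proof leaves implicit.
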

\begin{proof}
	It is sufficient to show that $(A+BK)'X(A+BK)-X\ml0$ defines a convex set in terms of $(A,B)$. By the Schur complement,
	\begin{equation*}
	(A+BK)'X(A+BK)-X\ml0 \iff \left[\begin{array}{cc}
	X & (A+BK)' \\ A+BK & X^{-1}
	\end{array}\right]\mg 0,
	\end{equation*}
	which is convex in $A,B$ for given (i.e. fixed) $K$ and $X$.
\end{proof}

\subsection{Additional material for experiments on the rotary inverted pendulum}\label{sec:pendulum_details}

\subsubsection{System description}
The Quanser QUBE-Servo 2 inverted pendulum is depicted in Figure \ref{fig:pendulum_photo}.
The system consists of an actuated arm that rotates in the horizontal plane;
actuation is provided by an electrical motor.
Attached to the end of the rotary arm is an un-actuated pendulum, which is free to rotate.
The voltage applied to the electric motor constitutes the control input $u$ for the LQR problem.
Rotary encoders record the angular position of the rotary arm and pendulum, denoted $\arm$ and $\pend$, respectively.
These angular positions are fed through a high-pass-filter to provide angular velocity estimates, $(\dot{\arm},\dot{\pend})$.
The observed state is then given by $x=[\arm,\pend,\dot{\arm},\dot{\pend}]'$.

\subsubsection{Experimental procedure}\label{sec:rollout}
To generate one rollout of problem data, we first swing-up the pendulum to the inverted position, stabilized by an LQR designed using a physics-based model of the system.
Then we apply the voltage signal $v_t=Kx_t+\sin(\omega t) + w_t$ and record the resulting state evolution (sampled at 100Hz), until the pendulum angle $|\pend|$ exceeds $20^\circ$, or the rotary arm angle $|\arm|$ exceeds $50^\circ$. 
Here $K=[1,-10,1,-3]$ constitutes a state feedback policy that does \emph{not} stabilize the system, but does keep the pendulum upright from slightly longer than if it were absent.
This extends the typical rollout duration to around 3-5 seconds, before the pendulum angle exceeds $20^\circ$.
The angular frequency $\omega$ is randomly sampled each rollout, $\omega\sim\mathcal{U}[20,35]$, where $\mathcal{U}[a,b]$ denotes the uniform distribution over the interval $[a,b]$.
Finally, $w_t$ denotes band-limited white noise, with a sampling time of $T_s=0.01$, and a gain of 0.05.
$w_t$ represents additional exogenous disturbances that we artificially introduce to the system.

Training data then consists of $\lbrace x_t,u_t\rbrace_{t=0}^T$, where $x_t$ denotes the recorded state sequence, and $u_t =Kx_t+\sin(\omega t) $, i.e., the disturbance $w_t$ is not observed for learning.
$T$ is truncated to 500 (i.e. 5 seconds) in the event that the rollout lasts this long. 
A typical rollout is depicted in Figure \ref{fig:rollout}.

\begin{figure}[h]
	\centering
	\includegraphics[width=0.5\linewidth]{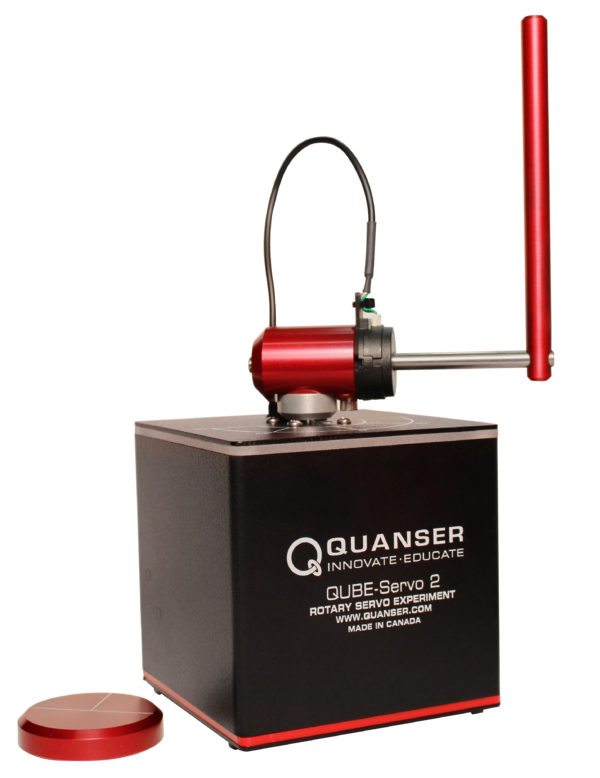}
	\caption{The Quanser QUBE-Servo 2 rotary pendulum, in the inverted position. Photo: www.quanser.com/products/qube-servo-2.}
	\label{fig:pendulum_photo}
\end{figure}

\begin{figure}[h]
	\centering
	\includegraphics[width=0.8\linewidth]{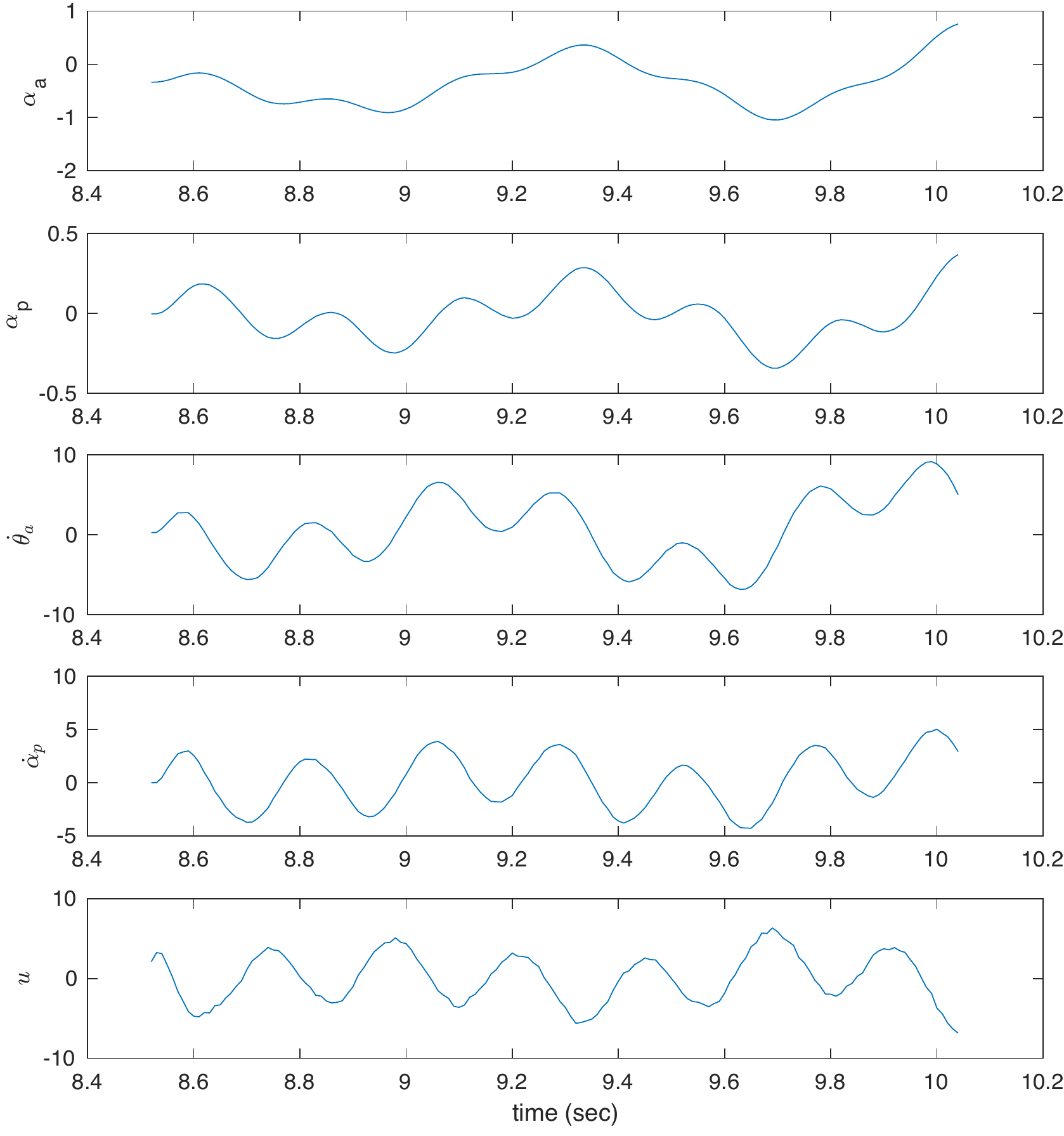}
	\caption{A typical rollout from the experimental procedure in \ref{sec:rollout}.}
	\label{fig:rollout}
\end{figure}

\subsubsection{Bounds for robust methods}
The robust synthesis methods \wc, \mixed, and \mixeditr\ require bounds on the error of the least squares estimate, i.e., $\epsilon_A\geq\norm{\als-\atr}{2}$ and $\epsilon_B\geq\norm{\bls-\btr}{2}$.
In \cite{dean2017sample}, these bounds are estimated, with a specific confidence level, via a Boostrap algorithm, assuming that the covariance is known.
In our setting, we estimate these bounds as described in \mysec~\ref{sec:pendulum}, i.e., by sampling from the 95\% confidence region of the posterior distribution.
This ensures a fair comparison between the methods, as they are, in essence, required to stabilize the same region of the parameter space.
We observed, however, that these bounds on the least squares error were too conservative; the magnitude of the uncertainty was too large, and the control synthesis optimization problems were infeasible.
The experiments presented in Figure \ref{fig:pendulum_costs} were attained by scaling down these error bounds by a factor of 100.
A number of scaling factors were tested, but 100 was found to achieve a reasonable trade-off between robustness and feasibility.
It is worth emphasizing that the \ours\ method used the samples from 95\% confidence region without any such scaling.

\end{document}